\newcommand{\nth}{n^{\text{th}}}
\newcommand{\lb}{\left}
\newcommand{\rb}{\right}
\newcommand{\coef}{\bar{c}}
\newcommand{\poly}{\mathrm{poly}}
\newcommand{\twin}{\mbox{\textsf{make twin-free}}}
\newcommand{\solveLin}{\mbox{\textsf{solve linear system}}}
\newcommand{\solveMat}{\mbox{\textsf{solve linear matrix system}}}
\newcommand{\aug}{\mbox{\textsf{augment $M$ with}}}
\newcommand{\findB}{\mbox{\textsf{find basis}}}
\newcommand{\val}{\mbox{\textsc{value}}}
\newcommand{\equ}{\mbox{\textsc{equivalent}}}
\newcommand{\hyp}{\mbox{\textsf{generate hypothesis}}}
\newcommand{\mem}{\mbox{\textsc{membership}}}
\newcommand{\fillMatrix}{\mbox{\textsf{fill matrix}}}
\newcommand{\addBlock}{\mbox{\textsf{add block}}}
\newcommand{\bsb}{\boldsymbol{\mathrm{b}}}
\newcommand{\Aut}{\mathrm{Aut}}
\newcommand{\Hom}{\text{\textsf{hom}}}
\newcommand{\cB}{\mathcal{B}}
\newcommand{\cG}{\mathcal{G}}
\newcommand{\cQ}{\mathcal{Q}}
\newcommand{\cR}{\mathcal{R}}
\newcommand{\bR}{\mathbb{R}}
\newcommand{\bZ}{\mathbb{Z}}
\theoremstyle{plain}
\newtheorem{theorem}{Theorem}
\newtheorem{lemma}[theorem]{Lemma}
\newtheorem{prop}[theorem]{Proposition}
\newtheorem{claim}[theorem]{Claim}
\newtheorem{remark}[theorem]{Remark}
\newtheorem{rem}[theorem]{Remark}
\newtheorem{cor}[theorem]{Corollary}
\begin{document}

\title{On the exact learnability of graph parameters: \\ The case of partition functions
\footnote{This is the complete version of the MFCS 2016 paper.}}

\author{\normalsize Nadia Labai
\thanks{Supported by the National Research Network RiSE (S114), and the LogiCS doctoral program (W1255) funded by the Austrian Science Fund (FWF).}}
\affil{Department of Informatics, Vienna University of Technology, Vienna, Austria\\
  \texttt{labai@forsyte.at}}
	
\author{\normalsize Johann A. Makowsky
\thanks{Partially supported by a grant of Technion Research Authority. 
This work was done [in part] while the author was visiting the Simons
Institute for the Theory of Computing.}}
\affil{Department of Computer Science, Technion - Israel Institute of Technology, Haifa, Israel\\
  \texttt{janos@cs.technion.ac.il}}
	
\date{}

\maketitle

\begin{abstract}
We study the exact learnability of real valued graph parameters $f$ which are known to be 
representable as partition functions which count the number of weighted homomorphisms into a graph $H$ with vertex weights $\alpha$ and edge weights $\beta$.
M. Freedman, L. Lov\'asz and A. Schrijver have given a characterization of these graph parameters in terms of the $k$-connection matrices $C(f,k)$ of $f$. Our model of learnability is based on D. Angluin's model of exact learning using membership and equivalence queries. Given such a graph parameter $f$, the learner can ask for the values of $f$ for graphs of their choice, and they can formulate hypotheses in terms of the connection matrices $C(f,k)$ of $f$.
The teacher can accept the hypothesis as correct, or provide a counterexample consisting of a graph.
Our main result shows that in this scenario, a very large class of partition functions,
the rigid partition functions, can be learned in time polynomial in the size of $H$ and the size of the largest counterexample in the Blum-Shub-Smale model of computation over the reals with unit cost.
\end{abstract}

\newif\ifappendix
\appendixtrue

\section{Introduction}
\newcommand{\bN}{\mathbb{N}}
\newcommand{\bQ}{\mathbb{Q}}

A graph {\em parameter $f: \mathcal{G} \rightarrow \mathcal{R}$ }
is a function from all finite graphs $\mathcal{G}$ into a ring or field $\mathcal{R}$,
which is invariant under graph isomorphisms. 

In this paper we initiate the study of exact learnability of graph parameters
with values in $\mathcal{R}$, which is assumed to be either $\bZ,\bQ$ or $\bR$.
As this question seems new, we focus here on the special case of graph parameters
given as partition functions, \cite{ar:FreedmanLovaszSchrijver07,bk:Lovasz-hom}.
We adapt the model of exact learning introduced by D. Angluin \cite{ar:Angluin78}.
Our research extends the work of 
\cite{ar:BeimelBBKV00,pr:HabrardOncina06},
where exact learnability of languages (set of words or labeled trees) recognizable by
multiplicity automata (aka weighted automata) was studied, to graph parameters with values in $\cR$.

\subsection{Exact learning}
In each step, the learner may make {\em membership queries $\val(x)$} in which they ask 
for the value of the target $f$ on specific input $x$.
This is the analogue of the $\mem$ queries used in the original 
model of exact learning, \cite{ar:angluin87}.
The learner may also propose a hypothesis $h$ 
by sending an $\equ(h)$ query to the teacher.
If the hypothesis is correct, the teacher returns ``YES'' and 
if it is incorrect, the teacher returns a counterexample.
A class of functions is \emph{exactly learnable} if there 
is a learner that for each target function $f$, 
outputs a hypothesis $h$ such that $f(x) = h(x)$ for all $x$ and does so
in time polynomial in the size of a shortest representation 
of $f$ and the size of a largest counterexample returned by the teacher.

\subsection{Formulating a hypothesis}
To make sense one has to specify the formalism (language)  $\mathfrak{L}$ 
in which a hypothesis has to be formulated.
It will be obvious in the sequel, that the restriction imposed by the choice of $\mathfrak{L}$
will determine whether $f$ is learnable or not.

Let us look at the seemingly simpler case of learning integer 
functions $f: \bZ \rightarrow \bZ$ or integer valued functions of words $w \in \Sigma^\star$ over an alphabet in $\Sigma$. 
\begin{enumerate}[(i)]
\item
If $f$ can be any function $f: \bZ \rightarrow \bZ$ or $f: \Sigma^\star \rightarrow \bZ$, 
there are uncountably many candidate functions
as hypotheses, and no finitary formalism $\mathfrak{L}$ is suitable to
formulate a hypothesis.
\item
If $f$ is known to be a polynomial $p(X) = \sum_i a_i X^i \in \bZ[X]$,
we can formulate the hypothesis as a vector $\mathbf{a}= (a_1, \ldots, a_m)$
in $\bZ^m$. Learning is successful if the learner finds the hypothesis $h = \mathbf{a}$
in the required time. Here Lagrange interpolation will be used to formulate
the hypotheses.
\item
If $f$ is known to satisfy some recurrence relation, the hypothesis
will consist of the coefficients and the length of the recurrence relation,
and exact learnability will depend on the class of recurrence relations
one has in mind.
\item
If $f: \Sigma^\star \rightarrow \bZ$ is a word function recognizable by a multiplicity automaton $MA$,
the hypotheses are given by the weighted transition tables of $MA$, cf. \cite{ar:BeimelBBKV00}.
\end{enumerate}

Looking now at a graph parameter $f: \mathcal{G} \rightarrow \cR$ what can we expect?
Again we have to restrict our treatment to a class of parameters 
where each member can be described by a finite string in a formalism $\mathfrak{L}$.

We illustrate the varying difficulty of the learning problem with the example of the chromatic polynomial $\chi(G;X \in \bN[X])$ for a graph $G$.
For $X=k$, the evaluation of $\chi(G;k)$ counts the number of proper colorings of $G$
with at most $k$ colors. It is well known that for fixed $G$, $\chi(G;k)$ is indeed
a polynomial in $k$, \cite{ar:Birkhoff1912,bk:Bollobas99}.
A graph parameter $f$ is a {\em chromatic invariant over $\mathcal{R}$} if 
\begin{enumerate}[(i)]
\item
it is multiplicative, i.e., for the disjoint union $G_1 \sqcup G_2$ of $G_1$ and $G_2$, 
it holds that $f(G_1 \sqcup G_2) = f(G_1) \cdot f(G_2)$, and
\item
there are 
$\alpha, \beta, \gamma \in  \mathcal{R}$ such that
$f(G) = 
\alpha \cdot f(G_{-e}) + 
\beta \cdot f(G_{/e})  
\mbox{   and   }
f(K_1) = \gamma 
$.
\end{enumerate}
$K_n$ denotes the complete graph on $n$ vertices, and $G_{-e}$ and $G_{/e}$ are, respectively, the graphs obtained from deleting the edge $e$ from $G$ and contracting $e$ in $G$.

The parameter $\chi(G;k)$ is a chromatic invariant with $\alpha = 1, \beta = -1$ and $\gamma =k$.
Finally, $\chi(G;k)$ has an interpretation by counting homomorphisms:
\begin{gather*}
\chi(G;m) = \sum_{t: G \rightarrow K_m} 1,
\end{gather*}
This is a special case of the homomorphism counting function for a fixed graph $H$:
\begin{gather*}
\Hom(G,H) = \sum_{t: G \rightarrow H} 1,
\end{gather*}
where $t$ is a homomorphism $t: G \rightarrow H$.

Now, let a graph parameter $f: \cG \rightarrow \cR$ be the target of a learning algorithm.
\begin{enumerate}[(i)]
\item
If $f$ is known to be an instance of $\chi(G;X)$, a hypothesis consists of a value $X=a$.
But in this case we know that $\chi(K_1;X)=X$, so it suffices to ask for $f(K_1)=a$.
\item
If $f$ is known to be a chromatic invariant, the hypothesis consists of the
triple $(\alpha, \beta, \gamma)$.
In this case a hypothesis can be computed from the values of $f(P_m)$ for undirected paths
$P_m$ for sufficiently many values of $m$.
\item
If $f$ is known to be an instance of $\Hom(-,H)$, a hypothesis would consist of a target graph
$H$.
\end{enumerate}

\subsection{Counting weighted homomorphisms aka partition functions}
A {\em weighted graph $H(\alpha,\beta)$} is a 
graph $H =(V(H), E(H))$ on $n = |V(H)|$ vertices
together with a vertex weight function
$\alpha: V(H) \rightarrow \bR$, viewed as a vector of length $n$, 
and an edge weights function $\beta: V(H)^2 \rightarrow \bR$ viewed as an $n \times n$ matrix,
with $\beta(u,v)=0$ if $(u,v) \not \in E(H)$.

A {\em partition function}\footnote{
In the literature $\Hom(-, H(\alpha,\beta))$ is also denoted by $Z_{H(\alpha, \beta)}(G)$,
e.g., in \cite{ar:Sokal2005a}. We follow the notation of \cite{bk:Lovasz-hom}.
}
$\Hom(-, H(\alpha,\beta))$
is the generalization of $\Hom(-,H)$ to weighted graphs, whose value on a graph $G$ is defined as follows:
\begin{align*}
\Hom(G, H(\alpha, \beta)) = 
\sum_{t: G \rightarrow H} 
\prod_{v \in V(G)} \alpha(t(v))
\prod_{(u,v) \in V(G)^2} \beta(t(u),t(v))
\end{align*}
To illustrate the notion of a partition function, let $H_{indep}$ be the graph with two vertices $\{u,v\}$ and the edges $\{(u,v), (u,u)\}$, shown in Figure \ref{fig:ind}.
Let $\alpha(u)=1, \alpha(v)=X$ and $\beta(u,v)=1, \beta(u,u)=1$.
Then $\Hom(-, H_{indep}(\alpha,\beta))$ is the {\em independence polynomial},
\begin{gather*}
\Hom(G, H_{indep}(\alpha,\beta)) = I(G;X) = \sum_j ind_j(G) X^j
\end{gather*}
where $ind_j(G)$ is the number of independent sets of size $j$ in the graph $G$.

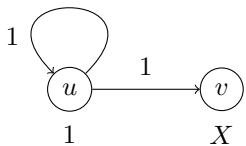
\begin{figure}%
\begin{tikzpicture}[scale=1]

\node[circle,draw] (v1) at (-2,1) {$u$};
\node[circle,draw] (v2) at (0,1) {$v$};
\draw[->]  (v1) edge (v2);
\draw  (v1) edge[->,loop, looseness=10] (v1);

\node (v1) at (-2,0.4) {$1$};
\node (v2) at (0,0.4) {$X$};
\node (v2) at (-1,1.3) {$1$};
\node at (-2.75,1.7) {$1$};
\end{tikzpicture}
\caption{The weighted graph $H_{indep}$.}%
\label{fig:ind}%
\end{figure}

We say a partition function $\Hom(-, H(\alpha,\beta))$ is {\em rigid} aka {\em asymmetric}
\footnote{
Some authors say $G$ is asymmetric if $G$ has no proper automorphisms, 
and $G$ is rigid if $G$ has no proper endomorphisms, \cite{kotters2009almost}.
Wikipedia uses rigid as we use it here.
}, 
if $H$ has no proper automorphisms. 
Note that automorphisms in a weighted graph 
also respect vertex and edge weights.
In our examples above, the evaluations of the independence polynomial are rigid partition functions,
whereas the evaluations of the chromatic polynomial are not.
It is known that almost all graphs are rigid:
\begin{theorem}[\cite{ar:ErdosRenyi1963,kotters2009almost}]
Let $G$ be a uniformly selected graph on $n$ vertices. 
The probability that $G$ is rigid tends to $1$ as $n \rightarrow \infty$.
\end{theorem}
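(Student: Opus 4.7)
The plan is to establish the equivalent statement $\Pr[\Aut(G) \neq \{\mathrm{id}\}] \to 0$ for $G$ drawn uniformly from labeled graphs on $n$ vertices (the same distribution as $G(n,1/2)$, so edges are independent fair coin flips). The standard route is a union bound over non-identity permutations $\pi \in S_n$, stratified by the number $k$ of points moved by $\pi$.

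First I would fix $\pi \neq \mathrm{id}$ and compute $\Pr[\pi \in \Aut(G)]$. The key observation is that $\pi$ acts on the $\binom{n}{2}$ unordered vertex pairs, and because edges are independent, $\pi$ preserves $G$ iff the edge indicator is constant on each pair-orbit. Thus, writing $o(\pi)$ for the total number of pair-orbits,
\[
\Pr[\pi \in \Aut(G)] \;=\; 2^{\,o(\pi)-\binom{n}{2}}.
\]

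Next I would lower-bound $\binom{n}{2}-o(\pi)$ in terms of $k$. A pair $\{a,b\}$ is a singleton orbit iff both endpoints are fixed by $\pi$ or $\pi$ swaps them as a $2$-cycle, so the number of fixed pair-orbits is at most $\binom{n-k}{2} + \lfloor k/2 \rfloor$. Since every non-fixed orbit has size $\geq 2$, a short calculation yields
\[
\binom{n}{2} - o(\pi) \;\geq\; \tfrac{1}{2}\Bigl(\tbinom{n}{2} - \tbinom{n-k}{2} - \tfrac{k}{2}\Bigr) \;=\; \tfrac{1}{4}\,k(2n-k-2).
\]
Finally, since there are at most $\binom{n}{k}k! \leq n^k$ permutations with exactly $k$ moved points, the union bound gives
\[
\Pr[\Aut(G) \neq \{\mathrm{id}\}] \;\leq\; \sum_{k=2}^{n} n^{k}\,2^{-k(2n-k-2)/4}.
\]
Splitting at $k = n/2$ then closes the argument: for $k \leq n/2$ the exponent is $\Omega(kn)$, producing a rapidly convergent geometric series with ratio $n\cdot 2^{-\Omega(n)}$; for $k > n/2$ the exponent is $\Omega(n^2)$, which easily dominates $n^k \leq 2^{n\log_2 n}$.

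The main obstacle is the second step: obtaining a lower bound on $\binom{n}{2}-o(\pi)$ that is useful \emph{uniformly} across all $k$. Near-identity permutations (e.g.\ single transpositions, of which there are $\binom{n}{2}$) provide only $O(n)$ effective constraints, while derangements provide $\Theta(n^2)$ constraints but are factorially many; the interpolating estimate $\tfrac14 k(2n-k-2)$ is precisely what makes both regimes close simultaneously in the union bound.
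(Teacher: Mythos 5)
The paper does not prove this statement at all---it is imported as a known result with citations to Erd\H{o}s--R\'enyi and K\"otters---so there is no in-paper argument to compare against; your proof stands or falls on its own. It stands: this is the standard first-moment argument, and the details check out. The orbit computation is right: for an orbit of size $s$ on unordered pairs the edge indicators agree with probability $2^{1-s}$, giving $\Pr[\pi\in\Aut(G)]=2^{o(\pi)-\binom{n}{2}}$; the singleton-orbit count is at most $\binom{n-k}{2}+\lfloor k/2\rfloor$, and since $o(\pi)\le s+\tfrac12(\binom{n}{2}-s)$ for $s$ singleton orbits, one gets $\binom{n}{2}-o(\pi)\ge\tfrac12\bigl(\binom{n}{2}-\binom{n-k}{2}-\tfrac{k}{2}\bigr)=\tfrac14 k(2n-k-2)$, exactly as you state (and this is tight at $k=2$, where a transposition yields $n-2$). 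The union bound and the split at $k=n/2$ then work as you describe, with the small-$k$ terms dominated by $(n2^{-cn})^k$ and the large-$k$ terms by $2^{n\log_2 n-\Omega(n^2)}$. Two minor remarks. First, you are implicitly reading ``uniformly selected graph on $n$ vertices'' as uniform over \emph{labeled} graphs, i.e.\ $G(n,1/2)$; that is the standard and intended reading here (the unlabeled version follows, but needs the extra observation that asymmetric graphs have orbits of full size $n!$ under relabeling). Second, the paper's notion of ``rigid'' is explicitly ``no proper automorphisms'' (what the cited K\"otters paper would call asymmetric), so proving $\Pr[\Aut(G)\neq\{\mathrm{id}\}]\to 0$ is indeed the right target and you do not need the stronger endomorphism-free statement.
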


If the target $f$ is known to be a (rigid) partition function 
$\Hom(-, H(\alpha,\beta))$
then the hypothesis consists of a (rigid) weighted graph $H(\alpha,\beta)$.

In Section \ref{sec:pre} we give the characterization of rigid and non-rigid partition functions
from \cite{ar:FreedmanLovaszSchrijver07,ar:lovasz06,bk:Lovasz-hom} in 
terms of connection matrices.

For technical reasons discussed in Section \ref{sec:conc},
in this paper we deal only with the learnability of rigid partition functions, and
leave the general case to future work.

\subsection{Main result}
Our main result can now be stated:

\begin{theorem}
\label{th:main}
Let $f$ be a graph parameter which is known to be a rigid partition function 
$f(G)= \Hom(G, H(\alpha,\beta))$.
Then $f$ can be learned in time polynomial in the size of $H$ and the size of the largest counterexample in 
the Blum-Shub-Smale model of computation over the reals with unit cost.
\end{theorem}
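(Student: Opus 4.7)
The strategy is to adapt the exact-learning algorithm of Beimel et al.\ for multiplicity automata to the connection-matrix framework for graph parameters. The crucial structural input, developed in Section~\ref{sec:pre}, is that for a rigid partition function $f=\Hom(-, H(\alpha,\beta))$ with $|V(H)|=n$, the $k$-connection matrix $C(f,k)$ has rank exactly $n^{k}$; in particular $C(f,1)$ has rank $n$, and a basis of its rows essentially identifies the vertices of $H$ up to relabeling, which is all the freedom rigidity allows.

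First I would maintain two growing finite sets $R$ and $S$ of $1$-labeled graphs, together with the submatrix $M_{R,S}$ whose entry at $(r,s)$ is $f(r\cdot s)$, obtained via a single $\val$ query. The invariants are that the rows of $M_{R,S}$ indexed by $R$ are linearly independent and that every natural ``one-step extension'' of a graph in $R$ (by adding an edge, a vertex, or by composing with a fixed small labeled gadget) produces a row still in the span of $R$ as seen through the columns $S$. Extending $S$ is used to detect violations of the second invariant, while the first is preserved by construction.

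Next, from $M_{R,S}$ I would synthesise a hypothesis weighted graph $H'(\alpha',\beta')$ on $|R|$ vertices. A few additional $\val$ queries express the single labeled vertex, viewed as a $1$-labeled graph, as a linear combination of the row basis, giving candidate weights $\alpha'$; the analogous analysis through the $2$-connection matrix applied to a single labeled edge extracts $\beta'$. I then pose $\equ(H')$; on acceptance the algorithm terminates. On a counterexample $G$, I convert $G$ into a $1$-labeled graph whose corresponding row in $C(f,1)$ is linearly independent of the current $R$, and add it to $R$. Since $|R|\le n$ throughout, at most $n$ equivalence queries are made, and between them the number of $\val$ queries and the cost of linear algebra in the BSS model over $\bR$ is polynomial in $n$ and in the size of the latest counterexample, giving the claimed complexity.

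The main obstacle is the counterexample processing: given only the numerical discrepancy $f(G)\neq \Hom(G, H'(\alpha',\beta'))$ for some graph $G$, one must exhibit a concrete labeled subgraph or derivative of $G$ whose row in $C(f,1)$ lies outside the span of $R$. This requires a binary-search-on-$G$ style argument, exploiting the gluing structure of $k$-labeled graphs and the multiplicativity of partition functions, in the spirit of counterexample processing in $L^{\ast}$-like algorithms. It is precisely here that rigidity of $H$ is essential: without it, a basis row could correspond to an orbit of vertices rather than to a single vertex, and reading off $\alpha',\beta'$ from the linear combinations would carry an automorphism-related ambiguity invisible to the membership oracle, so that no finite sequence of $\val$ queries could pin down the correct weighted graph.
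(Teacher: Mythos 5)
There is a genuine gap, in fact two. First, the step you yourself flag as ``the main obstacle''---turning a counterexample $G$ with $f(G)\neq\Hom(G,H'(\alpha',\beta'))$ into a labeled graph whose row in $C(f,1)$ lies outside the span of the current basis---is left as an unproved sketch, and the route you propose (a binary-search / $L^{\ast}$-style decomposition of the counterexample) does not transfer: that technique relies on words having a canonical prefix--suffix decomposition, and a counterexample graph has no analogous decomposition into ``prefixes'' along which one can localize the discrepancy. The paper shows this processing is unnecessary: if you simply label one vertex of $G$ arbitrarily and adjoin the resulting $1$-labeled graph, its row is \emph{automatically} independent of the current rows (Lemma~\ref{lem:ind_counter}). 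The reason is that if $R_x=\sum_i a_iR_{B_i}$, then both the target and the hypothesis---the hypothesis being a partition function that, by construction, agrees with $f$ on all entries of the submatrix $M$---would be forced to take the value $\sum_i a_i f(B_iB_1)$-type combinations consistently, contradicting $x$ being a counterexample. Without either this observation or a working substitute, your bound of at most $n$ equivalence queries, and hence the polynomial running time, is unsupported.

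Second, your extraction of the weights is not correct as stated. Expressing the single labeled vertex as a linear combination of an \emph{arbitrary} row basis of $C(f,1)$ (e.g.\ one made of counterexamples) does not yield $\alpha'$, and the analogous computation in $C(f,2)$ for a single labeled edge does not yield $\beta'$: the coefficients in such a basis have no direct interpretation as weights. What is true (and is the algebraic heart of the paper, via \cite[Chapter 6]{bk:Lovasz-hom}) is that the weights are read off the \emph{idempotent} basis $p_1,\ldots,p_q$ of $\cQ_1/f$, namely $\alpha(i)=f(p_i)$, and $\beta_{ij}$ are the coefficients of $K_2$ in the basis $p_i\otimes p_j$ of $\cQ_2/f$. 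Computing these idempotents from the finite submatrix $M$---by building the multiplication matrices $A_{B_i}$ from solutions of linear systems $M\mathbf{x}=\mathbf{b}$ and then solving linear matrix equations to represent the $A_{p_i}$ (Lemma~\ref{lem:idem}, Proposition~\ref{prop:id_matrices})---is the step your proposal is missing; rigidity enters precisely to guarantee, via Corollary~\ref{cor:rigid}, that there are exactly $q$ such idempotents and that the procedure terminates after $q$ iterations, not merely to remove a relabeling ambiguity as your last paragraph suggests.
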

\begin{rem}
If $f$ takes values in $\bQ$ rather than in $\bR$ we can also work in the Turing model of computation
with logarithmic cost for the elements in $\bQ$.
\end{rem}

To prove Theorem \ref{th:main} we will use the characterization of rigid partition functions 
in terms of connection matrices, \cite[Theorem 5.54]{bk:Lovasz-hom}, stated as Theorem \ref{th:FLS} and Corollary \ref{cor:rigid}
in Section \ref{sec:pre}. The difficulty of our 
result lies not in finding a learning algorithm by carefully manipulating
the counterexamples to meet the complexity constraints, but in proving the algorithm correct.
In order to do this we had to identify and extract the suitable algebraic properties underlying
the proof of Theorem \ref{th:FLS} and Corollary \ref{cor:rigid}.

The learning algorithm is given in pseudo-code as Algorithm \ref{alg:full}. 
It maintains a matrix $M$ used in the generation of the hypothesis $h$ from $\val$ and $\equ$ query results.
After an initial setup of $M$, in each iteration the algorithm generates a hypothesis $h$,
queries the teacher for equivalence between $h$ and the target and either terminates, or updates $M$ accordingly and moves on to the next iteration.
\begin{algorithm}

\caption{Learning algorithm for rigid partition functions}
\label{alg:full}
\begin{algorithmic}[1]
\State $n = 1$
\While{True}
	\State $\aug(B_n)$
	\State $P = \findB(M)$
	\State $h = \hyp(P)$
	\If{$\equ(h) = \mathrm{YES}$} 
		\State \Return $h$
	\Else 
		\State $n = n+1$
		\State $B_{n} = \equ(h)$ \Comment{$B_n$ receives a counterexample}

	\EndIf
\EndWhile

\end{algorithmic}
\end{algorithm}

It uses three black-boxes; $\findB$ which uses $M$ to find a certain basis $P$ of a graph algebra associated with the target function (see Section \ref{sec:pre}), 
$\hyp$ which uses this basis and $\val$ queries to construct a hypothesis $h$, 
and \textsf{augment} $M$ which augments the matrix $M$ after a counterexample is received, using $\val$ queries.

We briefly overview the complexity of the algorithm to illustrate that rigid partition functions are indeed exactly learnable. 
Proofs of validity and detailed analysis of the complexity are given in later sections.
For a target $H(\alpha,\beta)$ on $q$ vertices, the procedure $\findB$ solves $O(q)$ systems of linear equations, 
and systems of linear matrix equations, all of dimension $O(\poly(q))$.
The procedure $\hyp$ performs $O(q)$ graph operations of polynomial time complexity on graphs of size $O(\poly(q,|x|))$, where $|x|$ is the size of the largest counterexample, and $O(q^2)$ $\val$ queries.
The procedure \textsf{augment} $M$ performs $O(q)$ $\val$ queries. Thus, each iteration takes time $O(\poly(q,|x|))$. 
Lemma \ref{lem:ind_counter} will show that there are $O(q)$ iterations, so the total run time of the algorithm is polynomial in the size $q$ of $H(\alpha,\beta)$ and the size $|x|$ of the
largest counterexample.

\paragraph*{Organization}
In Section \ref{sec:pre} we give the necessary background on partition functions and the graph algebras induced by them.
Section \ref{sec:algorithm} presents the algorithm in detail and in Section \ref{sec:valid} we prove its validity and analyze its time complexity. 
We discuss the results and future work in Section  \ref{sec:conc}. 
\ifappendix
Some of the more technical proofs appear in Appendix \ref{app:proofs}.
\else
Due to space limitations, the appendix is included in the arXiv version, \cite{arxiv:mfcs16}.
\fi 

\section{Preliminaries}
\label{sec:pre}

Let $k \in \bN$.
A \emph{$k$-labeled graph} $G$
is a finite graph in which $k$ vertices, or less, are labeled with labels from $[k] = \{1,\ldots,k\}$. 
We denote the class of $k$-labeled graphs by $\cG_k$.
The \emph{$k$-connection} of two
$k$-labeled graphs 
$G_1,G_2 \in \cG_k$ 
is given by taking the disjoint union of
$G_1$ and $G_2$ and identifying vertices with the same label.
This produces a $k$-labeled graph 
$G = G_1 G_2$.
Note that $k$-connections are commutative.

\subsection{Quantum graphs}

A formal linear combination of a finite number of $k$-labeled graphs $F_i \in \cG_k$ with coefficients 
from $\bR$ is called a \emph{$k$-labeled quantum graph}. $\cQ_k$ denotes
the set of $k$-labeled
quantum graphs.

Let $x,y$ be $k$-labeled quantum graphs: $x=\sum_{i=1}^{n}{a_i F_i}$, 
and $y=\sum_{i=1}^{n}{b_i F_i}$. Note that some of the coefficients may be zero.
$\cQ_k$ is an infinite dimensional vector space, with the operations:
$x + y = 
\left( \sum_{i=1}^{n}{a_i F_i} 
\right) 
+ \left( \sum_{i=1}^{n}{b_i F_i} \right) = 
\sum_{i=1}^{n}{(a_i + b_i) F_i}$, 
and 
$\alpha \cdot x = \sum_{i=1}^{n}{(\alpha a_i)F_i}$.

$k$-connections extend to $k$-labeled quantum graphs by
$
xy =
\sum_{i,j=1}^{n} (a_i  b_j) (F_i F_j)  
$. 
Any graph parameter $f$ extends to $k$-labeled quantum graphs linearly:
$ 
f(x)=
\sum_{i=1}^{n}{a_i f(F_i)}
$. 

\subsection{Equivalence relations for quantum graphs}

The \emph{$k$-connection matrix} $C(f,k)$ of a graph parameter $f: \cG \rightarrow \cR$ is a bi-infinite  
matrix over $\cR$ whose rows and columns are labeled with 
$k$-labeled graphs, and its entry at the row labeled with $G_1$ and the column labeled with $G_2$ contains the value of $f$ on $G_1 G_2$:
\begin{align*}
C(f,k)_{G_1, G_2} = f(G_1 G_2).
\end{align*}
Given a connection matrix $C(f,k)$, we associate with a $k$-labeled graph $G \in \cG_k$ the (infinite) row vector 
$R_G^k$ appearing in the row labeled by $G$ in $C(f,k)$. 
If $k$ is clear from context we write
$R_G$. 
Similarly, we associate an infinite row vector $R_x$ with $k$-labeled quantum graphs $x = \sum_{i=1}^{n}{a_i F_i}$, 
defined as $R_x = \sum_{i=1}^{n}{a_i R_{F_i}}$ 
where $R_{F_i}$ is the row in $C(f,k)$ labeled by the $k$-labeled graph $F_i$.

We say $C(f,k)$ has \emph{finite rank} if there are finitely many $k$-labeled graphs 
$\cB_{C(f,k)} = \{B_1,\ldots, B_n\}$ whose rows $\cR_{C(f,k)} = \{R_{B_1},\ldots, R_{B_n}\}$ linearly span $C(f,k)$.
Meaning, for any $k$-labeled graph $G$, there exists a linear combination of the rows in $\cR_{C(f,k)}$ which equals the row vector $R_G$. 
We say that $C(f,k)$ has rank $n$ and denote $r(f,k) = n$ if any set of less than $n$ graphs does not linearly span $C(f,k)$.

The main result we use is the characterization of partition functions in terms of connection matrices.
We do not need its complete power, so we state the relevant part:
\begin{theorem}[Freedman, Lov\'asz, Schrijver, \cite{ar:FreedmanLovaszSchrijver07}]
\label{th:FLS}
Let $f$ be a graph parameter that is equal to $\Hom(-,H(\alpha,\beta))$ for some $H(\alpha,\beta)$ on 
$q$ vertices. Then $r(f,k) \leq q^k$ for all $k \geq 0$.
\end{theorem}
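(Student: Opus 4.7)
My plan is to decompose $\Hom(G_1 G_2, H(\alpha,\beta))$ according to where labeled vertices are mapped, and then read off the rank bound from the resulting matrix factorization.

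For each map $\phi \in V(H)^k$, I would introduce a ``pinned'' partition function $\Hom_\phi(G, H(\alpha,\beta))$ that sums only over those homomorphisms $t : G \to H$ whose restriction to the labeled vertices of $G$ equals $\phi$. The key combinatorial observation is that a homomorphism $t: G_1 G_2 \to H$ is exactly a pair $(t_1, t_2)$ of homomorphisms $t_i : G_i \to H$ that agree on labeled vertices. Partitioning the homomorphisms counted by $\Hom(G_1 G_2, H(\alpha,\beta))$ according to their common restriction $\phi$ to labels, one arrives at an identity of the form
\begin{align*}
\Hom(G_1 G_2, H(\alpha,\beta)) \;=\; \sum_{\phi \in V(H)^k} \Hom_\phi(G_1, H(\alpha,\beta)) \cdot \widetilde{\Hom}_\phi(G_2, H(\alpha,\beta)),
\end{align*}
where $\widetilde{\Hom}_\phi$ is an asymmetric variant that \emph{omits} the $\alpha$-weight at labeled vertices and the $\beta$-weight of edges whose endpoints are both labeled, since those contributions are already supplied by $\Hom_\phi(G_1, \cdot)$ after the identification. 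This convention ensures that each vertex and edge of $G_1 G_2$ contributes its weight exactly once.

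The identity above is precisely the factorization $C(f,k) = A^{\top} B$, where $A$ and $B$ are the matrices with rows indexed by the $q^k$ functions $\phi : [k] \to V(H)$ and columns indexed by $k$-labeled graphs in $\cG_k$, with entries $\Hom_\phi(G, \cdot)$ and $\widetilde{\Hom}_\phi(G, \cdot)$ respectively. Since $B$ has only $q^k$ rows, every row of $C(f,k)$ is a linear combination of at most $q^k$ vectors, and therefore $r(f,k) \le q^k$.

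The main obstacle is the weight bookkeeping in defining $\Hom_\phi$ and $\widetilde{\Hom}_\phi$: one must arrange that after identifying labeled vertices, each vertex-weight at a labeled vertex and each edge-weight of a labeled--labeled edge of $G_1 G_2$ is counted exactly once, so that no $G_1, G_2$-dependent correction factor survives in the sum. The asymmetric convention above is one natural way to achieve this. A minor additional wrinkle is that a $k$-labeled graph may use fewer than $k$ labels; one handles this by treating unused labels uniformly on both sides (so only those $\phi$ consistent with the label-set present matter), which keeps the total number of relevant $\phi$'s at most $q^k$ and preserves the bound. Once the bookkeeping is set up, the rank bound itself is automatic from the factorization.
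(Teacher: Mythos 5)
The paper itself does not prove Theorem~\ref{th:FLS} (it is quoted from Freedman--Lov\'asz--Schrijver), and your overall architecture --- pin the images of the labeled vertices, factor $C(f,k)$ through the $q^k$ maps $\phi\colon [k]\to V(H)$, and read off $r(f,k)\le q^k$ from the factorization $C(f,k)=A^{\top}B$ --- is exactly the standard argument from the cited sources. The flaw is in your specific bookkeeping convention: omitting from $\widetilde{\Hom}_\phi(G_2,\cdot)$ the $\beta$-weights of edges with both endpoints labeled is not correct, because such an edge of $G_2$ need not be present in $G_1$, so its weight is \emph{not} ``already supplied'' by $\Hom_\phi(G_1,\cdot)$. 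Concretely, take $k=2$, let $G_1$ be two labeled vertices with no edge and $G_2$ the same two labeled vertices joined by an edge; then $G_1G_2=G_2$ and $f(G_1G_2)=\sum_{u,v}\alpha(u)\alpha(v)\beta(u,v)$, whereas your right-hand side evaluates to $\sum_{\phi}\alpha(\phi(1))\alpha(\phi(2))\cdot 1=\bigl(\sum_{u}\alpha(u)\bigr)^2$, since $\widetilde{\Hom}_\phi(G_2,\cdot)$ discards the only edge weight. So the displayed identity, as you set it up, is false.

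The repair is easy and is what the standard proof does: the asymmetric treatment is needed only for the \emph{vertex} weights at labeled vertices (omit them in one factor, or keep them in both and divide by $\prod_i\alpha(\phi(i))$), while \emph{all} edge weights are kept in both factors. Under the disjoint-union-then-identify definition of $G_1G_2$, every edge of $G_1G_2$ originates from exactly one of $G_1,G_2$ (one should be explicit that parallel edges created by the gluing are retained, as in the multigraph convention of the cited works), so no edge weight is counted twice and the identity holds; the rank bound then follows exactly as you state. A smaller point: your treatment of graphs using fewer than $k$ labels is not actually resolved by ``treating unused labels uniformly'' --- if a label is unused by both $G_1$ and $G_2$, summing over all $\phi\in V(H)^k$ overcounts by a factor of $q$ per such label --- but this is peripheral, since the sources the paper cites state the theorem for graphs carrying all $k$ labels, and the fix there is a matter of convention rather than of the rank argument itself.
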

The exact rank $r(f,k)$ was characterized in \cite{ar:lovasz06}, but first we need some definitions. 
A weighted graph $H(\alpha,\beta)$ is said to be \emph{twin-free} if $\beta$ does not contain two separate rows that are identical to each other
\footnote{ 
If $H(\alpha,\beta)$ has twin vertices, they can be merged into one vertex by adding their vertex weights 
without changing the partition function. As the size of the target representation is the smallest possible, 
we assume all targets are twin-free.}.
Let $H(\alpha,\beta)$ be a weighted graph
on $q$ vertices, 
and let $\Aut(H(\alpha,\beta))$ be the automorphism group of $H(\alpha,\beta)$. $\Aut(H(\alpha,\beta))$ acts on ordered $k$-tuples of vertices
$[q]^k = \{\phi: [k] \rightarrow [q]\}$ 
by $(\sigma \circ \phi)(i) = \sigma (\phi (i))$ for $\sigma \in \Aut(H(\alpha,\beta))$.
The \emph{orbit} of $\phi$ is the set of ordered $k$-tuples $\psi$ of vertices such that  $\sigma \circ \phi = \psi$ for an automorphism $\sigma \in \Aut(H(\alpha,\beta))$. 
The \emph{number of orbits} of $\Aut(H(\alpha,\beta))$ on $[q]^k$ is the number of different orbits for elements $\phi \in [q]^k$.
\begin{theorem}[Lov\'asz, \cite{ar:lovasz06}]
\label{th:orbits}
Let $f = \Hom(-, H(\alpha,\beta) )$ for a twin-free weighted graph $H(\alpha,\beta)$ on $q$ vertices. 
Then $r(f,k)$ is equal to the number of orbits of $\Aut(H(\alpha,\beta))$ on $[q]^k$ for all $k \geq 0$.
\end{theorem}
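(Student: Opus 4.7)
The plan is to factor $C(f,k)$ as a product involving a $[q]^k \times \cG_k$ matrix $A$, bound its rank from above using the action of $\Aut(H(\alpha,\beta))$ on the rows of $A$, and invoke the twin-free hypothesis to obtain a matching lower bound.

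\textbf{Step 1 (factorization).} For a $k$-labeled graph $G$ and a map $\phi : [k] \to [q]$, define $\Hom_\phi(G, H(\alpha,\beta))$ as the partial sum of the partition function restricted to those $t : V(G) \to V(H)$ whose restriction to the $k$ labeled vertices equals $\phi$. Any homomorphism $t : G_1 G_2 \to H$ is determined by its restrictions to $G_1$ and $G_2$ together with the common image $\phi$ on the identified labels, so sorting the sum defining $f(G_1 G_2)$ by $\phi$ and accounting for the single counting of $\alpha$-weights (and of edge weights among labels) at the shared vertices yields
\[
f(G_1 G_2) \;=\; \sum_{\phi \in [q]^k} \Hom_\phi(G_1, H(\alpha,\beta))\; w_\phi\; \Hom_\phi(G_2, H(\alpha,\beta))
\]
for explicit weights $w_\phi$ depending only on $\phi$. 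Setting $A_{\phi, G} = \Hom_\phi(G, H(\alpha,\beta))$ and $D$ the diagonal matrix with entries $w_\phi$, this is exactly $C(f,k) = A^{\top} D A$, so $r(f,k) \leq \mathrm{rank}(A)$.

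\textbf{Step 2 (upper bound by orbits).} Any $\sigma \in \Aut(H(\alpha,\beta))$ preserves $\alpha$ and $\beta$, hence post-composition by $\sigma$ is a weight-preserving bijection on homomorphisms $G \to H$. Consequently $A_{\phi, G} = A_{\sigma \circ \phi, G}$ for every $G \in \cG_k$, i.e.\ rows of $A$ indexed by $\phi$'s in the same orbit coincide. Therefore $\mathrm{rank}(A)$, and a fortiori $r(f,k)$, is at most the number of orbits of $\Aut(H(\alpha,\beta))$ on $[q]^k$.

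\textbf{Step 3 (lower bound; the main obstacle).} It remains to match this bound from below by showing the span of the functions $\phi \mapsto \Hom_\phi(G, H(\alpha,\beta))$ in $\bR^{[q]^k}$ has dimension equal to the number of orbits. The inclusion into the orbit-constant functions is Step 2, so the task reduces to \emph{separating} orbits: for each pair $\phi, \psi$ in distinct orbits, I need to exhibit a $k$-labeled graph $G$ with $\Hom_\phi(G, H(\alpha,\beta)) \neq \Hom_\psi(G, H(\alpha,\beta))$. This is where twin-freeness is essential, since it forces distinct vertices of $H$ to have distinct $\beta$-rows; a rooted gadget attached to a single label (a labeled vertex joined by a short path of varying length to probe the $\beta$-row moments) then distinguishes any two vertices of $H$, and assembling such gadgets one at each of the $k$ label positions handles general tuples. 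Once separation is in hand, one observes that the span is closed under the pointwise product induced by $k$-connection (up to an $\alpha$-correction, i.e.\ after renormalising by $w_\phi$) and contains the constant function via the empty labeled graph; a finite-dimensional Stone--Weierstrass argument on the orbit space then upgrades separation to equality, giving $r(f,k) \geq \#\text{orbits}$ and completing the proof.
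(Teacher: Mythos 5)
This theorem is quoted by the paper from Lov\'asz \cite{ar:lovasz06} without proof, so your argument must stand on its own; it contains two genuine gaps, both in Step 3. First, even granting that the functions $\phi \mapsto \Hom_\phi(G,H(\alpha,\beta))$ span all orbit-invariant functions, i.e.\ $\mathrm{rank}(A)=\#\text{orbits}$, this does not yet give $r(f,k)\geq \#\text{orbits}$: you established $C(f,k)=A^{\top}DA$, and over $\bR$ the rank of $A^{\top}DA$ can be strictly smaller than $\mathrm{rank}(A)$ when $D$ is not positive (e.g.\ $A=(1,1)^{\top}$, $D=\mathrm{diag}(1,-1)$ gives $A^{\top}DA=0$). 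Grouping the equal rows of $A$, one has $C(f,k)=\sum_{O}W_O\,a_O a_O^{\top}$ over orbits $O$ with $W_O=\sum_{\phi\in O}w_\phi$, so you must argue $W_O\neq 0$ for every orbit; this uses that $w_\phi=\prod_i\alpha(\phi(i))$ is constant on each orbit together with a hypothesis that the vertex weights are nonzero (with the paper's permissive definition, which allows $\alpha\equiv 0$, the claimed equality is literally false, so some such hypothesis has to be invoked explicitly). Your write-up never addresses the passage from $\mathrm{rank}(A)$ back to $\mathrm{rank}(A^{\top}DA)$.

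Second, and more seriously, the separation step is the actual core of Lov\'asz's theorem, and the gadget you propose does not achieve it. Rooted paths only probe weighted walk counts from a vertex, i.e.\ entries of $\beta^{\ell}\alpha$, and these do not separate non-twin vertices in general: take $H$ an asymmetric cubic graph (e.g.\ the Frucht graph) with all weights $1$; it is twin-free (equal $\beta$-rows would yield a transposition automorphism) and all $q$ orbits are singletons, yet every rooted path admits exactly $3^{\ell}$ homomorphisms from every root, so your gadgets separate no pair of vertices at all. Moreover, for $k\geq 2$ one cannot in general separate tuples by attaching one gadget per label: if $\phi(i)$ and $\psi(i)$ lie in the same $1$-orbit for each $i$ but $\phi,\psi$ lie in different $k$-orbits (e.g.\ an adjacent versus a non-adjacent pair of vertices in the disjoint union of two edges), then disjoint per-label gadgets give identical values and one needs connected gadgets joining the labels. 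Proving that any two tuples in distinct orbits are separated by some $k$-labeled graph is precisely the nontrivial content of the theorem --- in Lov\'asz's treatment it is obtained via the finite-dimensional graph algebra and its idempotents, not by explicit local gadgets --- and your Step 3 in effect assumes it. Steps 1--2 and the finite-dimensional Stone--Weierstrass upgrade from separation to spanning are fine.
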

We use the special case:
\begin{cor}
\label{cor:rigid}
Let $f = \Hom(-, H(\alpha,\beta) )$ for a rigid twin-free weighted graph $H(\alpha,\beta)$ on 
$q$ vertices.
Then $r(f,k) = q^k$ for all $k \geq 0$.
\end{cor}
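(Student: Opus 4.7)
The plan is to derive Corollary~\ref{cor:rigid} as a direct specialization of Theorem~\ref{th:orbits}, using only the definition of rigidity. The key observation is that rigidity of $H(\alpha,\beta)$ says that the automorphism group $\Aut(H(\alpha,\beta))$ is trivial, i.e.\ contains only the identity permutation on $V(H)$.

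First I would unpack the definition of the action of $\Aut(H(\alpha,\beta))$ on $[q]^k$: for $\sigma \in \Aut(H(\alpha,\beta))$ and $\phi \colon [k] \to [q]$, the action is $(\sigma \circ \phi)(i) = \sigma(\phi(i))$. Two tuples $\phi, \psi \in [q]^k$ lie in the same orbit precisely when there exists $\sigma \in \Aut(H(\alpha,\beta))$ with $\sigma \circ \phi = \psi$. When $\Aut(H(\alpha,\beta)) = \{\mathrm{id}\}$, this equation forces $\psi = \phi$, so every orbit is a singleton and the number of orbits equals $|[q]^k| = q^k$.

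Applying Theorem~\ref{th:orbits} (whose hypothesis that $H(\alpha,\beta)$ is twin-free is given) then yields $r(f,k) = q^k$ for every $k \geq 0$. As a consistency check, this matches the upper bound $r(f,k) \leq q^k$ from Theorem~\ref{th:FLS}, so no tension arises. Since both invocations are immediate, I do not expect any obstacle; the only thing worth being careful about is to note explicitly that rigidity (no proper automorphisms) indeed means $\Aut(H(\alpha,\beta))$ is the trivial group, and that automorphisms of a weighted graph must respect both vertex and edge weights, as pointed out in the paragraph introducing the notion of rigidity.
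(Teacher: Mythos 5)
Your proposal is correct and follows exactly the route the paper intends: the corollary is stated as an immediate specialization of Theorem~\ref{th:orbits}, since rigidity makes $\Aut(H(\alpha,\beta))$ trivial and hence every orbit on $[q]^k$ a singleton, giving $q^k$ orbits. No gaps; this matches the paper's (implicit) argument.
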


We define an equivalence relation $\equiv_{f,k}$ over $\cQ_k$ where two $k$-labeled quantum graphs $x$ and $y$ are in the same equivalence class 
if and only if the infinite vectors 
$R_x$
and $R_y$ are identical:
$
x \equiv_{f,k} y \iff R_x^k = R_y^k.
$
Note that the set $\cQ_k/f$ of equivalence classes of $\equiv_{f,k}$ is exactly the vector space 
$span(C(f,k))$ generated by linear combinations of rows in $C(f,k)$.
$k$-connections extend to these vectors by: $R_x R_y = R_{xy}$.

Thus, if $r(f,k) = n$ with spanning rows $\cR_{C(f,k)} = \{R_{B_1},\ldots, R_{B_n}\}$, they form a basis of $\cQ_k/f = span(C(f,k))$.
For brevity, we occasionally also refer to $\cB_{C(f,k)}$ as a basis.

Let $x$ be a $k$-labeled quantum graph whose equivalence class $R_x$ 
is given as the linear combination $R_x = \sum_{i=1}^{n}{\gamma_i R_{B_i}}$.
We call the column vector 
$\coef_x = (\gamma_1,\ldots,\gamma_n)^T$ the \emph{coefficients vector} of $x$, or \emph{representation} of $x$ using $\cB_{C(f,k)}$.

\section{The learning algorithm in detail}
\label{sec:algorithm}

In this section we present the learning algorithm in full detail. 
The commentary in this exposition  
foreshadows the arguments in Section \ref{sec:valid}, but otherwise validity is not considered here.
We do not address complexity concerns in this section either, however, we reiterate for the sake of clarity that 
the algorithm runs on a Blum-Shub-Smale machine, \cite{ar:bss89,bk:bss}, over the reals.
In such a machine, real numbers are treated as atomic objects; they are stored in single cells, and arithmetic operations are performed on them in a single step. 

{The objects the algorithm primarily works with are real matrices. In a context containing a basis $\cB_{C(f,k)}$, 
we associate a real matrix $A_x$ with each quantum graph $x$ such that the following holds. 
\begin{center}
The coefficients vector $\coef_{xy}$ of $xy$ using $\cB_{C(f,k)}$ is given by
$A_x \coef_{y}$.	\hspace{2em} (*)
\end{center}}
This device, as we will see in Section \ref{sec:valid}, will allow the algorithm to search for, and find, special quantum graphs that provide
a translation of the answers of $\val$ and $\equ$ queries
into a hypothesis.

As mentioned earlier, Algorithm \ref{alg:full} maintains a matrix $M$ which is a submatrix of $C(f,1)$.
In each iteration the algorithm generates a hypothesis $h = (\alpha^{(h)}, \beta^{(h)})$ using $M$,
and queries the teacher for equivalence between $h$ and the target $f$.
If the hypothesis is correct, the algorithm returns $h$, otherwise it augments $M$ with a $1$-labeled version of the counterexample, and moves on to the next iteration.

\begin{rem}
Strictly speaking, the teacher may be asked $\val$ queries on (unlabeled) graphs, however, we freely write
$\val(G)$ for $k$-labeled graphs $G \in \cG_k$.
Additionally, the algorithm will need to know
the value of the target on some quantum graphs. Since any graph parameter
extend to quantum graphs linearly, for a quantum graph $x = \sum_{i=1}^{n}{a_i F_i}$ we write
$\val(x)$ as shorthand for $\sum_{i=1}^{n}{a_i \cdot \val(F_i)}$ throughout the presentation.
\end{rem}

\paragraph*{Incorporating counterexamples}

The objective is to keep a non-singular submatrix $M$ of $C(f,1)$. 
The first $1$-labeled graph $B_1$ with which $M$ is augmented is some arbitrarily chosen $1$-labeled graph.

Upon receiving a $B_n$ graph as counterexample, 
the $1$-label is arbitrarily assigned to one of its vertices, making it a $1$-labeled graph.
Then $\aug(B_n)$ adds a row and a column to $M$ labeled with the (now) $1$-labeled graph $B_n$, 
and fills their entries with the values 
$f(B_n B_i) = f(B_i B_n)$, for $i \in [n]$, using $\val$ queries.

The other functions are slightly more complex.

\paragraph*{Finding an idempotent basis}

The function $\findB$, given in pseudo-code as Algorithm \ref{alg:find_basis}, receives as input the matrix $M$. 
For reasons which will become apparent later, we are interested in finding a certain (idempotent) basis of the linear space generated by
the rows of $C(f,1)$. For this purpose, in its first part $\findB$ iteratively, over $k=1,\ldots,n$, computes the entries of matrices  $A_{x}$ as in (*), where $x$ are $B_i$, $i \in [n]$, by
solving multiple systems $M \mathbf{x} = \mathbf{b}$ of linear equations,
and using the solutions $\Gamma$ of those systems to fill the entries of the matrices $A_{B_i}$, 
where the $(k,j)$ entry of $A_{B_i}$ is $\boldsymbol{\gamma}^{ij}(k)$.
Let $p_i$, $i \in [n]$ be those quantum graphs for which 
$A_{p_i}$ is the $n \times n$ matrix with the value $1$ in the entry $(i,i)$ and zero in all other entries.
Note that the matrices $A_{p_i}$, $i \in [n]$ are linearly independent. We will see that $p_i$, $i \in [n]$ are the idempotent basis, now we wish to find their
representation using $B_i$, $i \in [n]$.

For $i \in [n]$, the representation $\coef_{p_i}$ of the basic idempotent $p_i$ using the basis elements $B_i, i \in [n]$ is found by solving a system $A \mathbf{X} = A_{p_i}$
of linear \emph{matrix equations}, where $A$ is a block matrix whose blocks are the matrices $A_{B_i}$, $i \in [n]$.
Each solution is added to $\Delta$.
 
Finally, $\findB$ outputs the set $\Delta$ of these representations $\coef_{p_i}$, $i \in [n]$.
Then we have that $R_{p_i} = \sum_{k=1}^{n}{\coef_{p_i}(k) R_{B_k}}$ where $\coef_{p_i}$ is the 
coefficients vector of $p_i$ using $B_i$, ${i \in [n]}$.
The representations $\coef_{p_i} \in \Delta$ of the elements $p_i$, $i \in [n]$,
are what will provide a translation from results of $\val$ queries to weights.

\begin{algorithm}
\caption{$\findB$ function }
\label{alg:find_basis}
\begin{algorithmic}[1]
\State $\Gamma = \emptyset$
\For{each $i,j \in [n]$ }
	\For{$k=1,\ldots,n$}
		\State $\bsb(k) = \val(B_i B_j B_k)$
	\EndFor
	\State $\boldsymbol{\gamma}^{ij} = \solveLin(M \mathbf{x} =  \bsb)$
	\State $\Gamma = \Gamma \cup \{\boldsymbol{\gamma}^{ij}\}$
\EndFor
\For {$i \in [n]$}
		\State $A_{B_i} = \fillMatrix(i, \Gamma)$
		\State $A = \addBlock(A, i, A_{B_i})$ \Comment{$A$ is a block matrix with $A_{B_i}$ on its $i$th block}
\EndFor
\State $\Delta = \emptyset$
\For {$i \in [n]$}
	\State $\coef_{p_i} = \solveMat(A \mathbf{X} = A_{p_i})$
	\State $\Delta = \Delta \cup \{\coef_{p_i}\}$
\EndFor
\State \Return $\Delta$
\end{algorithmic}
\end{algorithm}

\paragraph*{Generating a hypothesis}
The function $\hyp$, given in pseudo-code as Algorithm \ref{alg:hyp}, receives as input the representations $\coef_{p_i}$ of the $1$-labeled quantum graphs $p_i$, $i \in [n]$,
which it uses to find the entries of the vertex weights vector $\alpha^{(h)}$ directly through $\val$ queries.

Then $\hyp$ finds the $2$-labeled analogues of these $1$-labeled quantum graphs. 
Those $2$-labeled analogues form a basis of 
of $\cQ_2/f$.

Denote by $K_2$ the $2$-labeled graph composed of a single edge with both vertices labeled.
Next, $\hyp$ finds the representation of $R_{K_2}^2$, that is the row labeled with $K_2$ in $C(f,2)$, using the basis $\cR_{C(f,2)}$. 
We find the representation of this specific graph $K_2$ as 
the coefficients in $\coef_{K_2}$ constitute the entries of the edge weights matrix $\beta^{(h)}$ (see Section \ref{sec:valid}).

This representation is found by solving a linear system of equations, similarly to how $\findB$ uses $\solveLin$,
but here we use the diagonal matrix $N$ whose entries correspond to the elements of $\cB_{C(f,2)}$. 

The solution of said system, i.e., the coefficients vector $\coef_{K_2}$ of $K_2$, 
is used to fill the edge weights matrix $\beta^{(h)}$.
If needed, $\beta^{(h)}$ is made twin-free by contracting the twin vertices into one and summing their weights in $\alpha^{(h)}$.

Finally, $\hyp$ returns the hypothesis $h = (\alpha^{(h)}, \beta^{(h)})$ as output.

\begin{algorithm}
\caption{$\hyp$ function }
\label{alg:hyp}
\begin{algorithmic}[1]
\For{each $i \in [n]$ }
	\State $\alpha^{(h)}(i) = \val(p_i)$
\EndFor
\State $N = 0^{n^2 \times n^2}$ \Comment{$N$ is a zero matrix of dimensions $n^2 \times n^2$.}
\For{$i=1,\ldots,n$}
	\For{$j=1,\ldots,n$}
		\State $p_{ij} = p_i \otimes p_j$ \Comment{See Remark \ref{rem:tensor}.}
		\State $N_{p_{ij},p_{ij}} = \val(p_{ij}p_{ij})$
		\State $\mathbf{b}(i_j) = \val( K_2 \, p_{ij})$
	\EndFor
\EndFor
\State $\beta^{(h)} = \solveLin(N \mathbf{x} =  \mathbf{b})$
\State $\twin(\alpha^{(h)},\beta^{(h)})$
\State $h = (\alpha^{(h)}, \beta^{(h)})$
\State \Return $h$
\end{algorithmic}
\end{algorithm}

\begin{rem}[Algorithm \ref{alg:hyp}]
\label{rem:tensor}
Let $q_i$ be the $1$-labeled quantum graph $p_i$ interpreted as a $2$-labeled quantum graph, and let $q_j$
be $p_j$ with the labels of its components renamed to $2$, and also interpreted as a $2$-labeled quantum graph.
The result of $p_i \otimes p_j$ is the $2$-labeled quantum graph $q_i \sqcup_2 q_j$.
\end{rem}

\section{Validity and complexity}
\label{sec:valid}

As stated earlier, a class of functions is exactly learnable if there 
is a learner that for each target function $f$, 
outputs a hypothesis $h$ such that $f$ and $h$ identify on all inputs, and does so
in time polynomial in the size of a shortest representation 
of $f$ and the size of a largest counterexample returned by the teacher.
The proof of Theorem \ref{th:main} argues that Algorithm \ref{alg:full} is such a learner for the class
of rigid partition functions, through Theorem \ref{th:valid}, which proves validity, and Theorem \ref{th:complexity}, which
proves the complexity constraints are met.

To prove validity, we first state existing results on properties of graph algebras induced by partition functions,
then show, through somewhat technical algebraic manipulations, how our algorithm successfully exploits these properties
to generate hypotheses. We then show our algorithm eventually terminates with a correct hypothesis.

For the rest of the section, let $H(\alpha,\beta)$ be a rigid twin-free weighted graph on $q$ vertices, 
and denote $f = \Hom(-, H(\alpha,\beta))$.
\begin{theorem}
\label{th:valid}
Given access to a teacher for $f$, Algorithm \ref{alg:full} outputs a hypothesis $h$ such that $f(G) = h(G)$ for all graphs $G \in \cG$.
\end{theorem}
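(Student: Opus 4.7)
The plan is to decompose the proof of Theorem \ref{th:valid} into three parts: correctness of the hypothesis produced in a single iteration once the basis is complete, strict rank-increase progress from every counterexample, and a bound on the total number of iterations.

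First, for correctness at completion, I would invoke Corollary \ref{cor:rigid}: for our rigid twin-free $H(\alpha,\beta)$ we have $r(f,1)=q$ and $r(f,2)=q^2$. The algebraic structure underlying \cite{ar:lovasz06,bk:Lovasz-hom} provides a canonical basis of $q$ orthogonal idempotents $e_1,\ldots,e_q$ of $\cQ_1/f$, in bijection with the vertices of $H$, whose $\val$-images encode $\alpha$, with tensor products $\{e_i\otimes e_j\}$ forming an orthogonal idempotent basis of $\cQ_2/f$ from which $\beta$ can be read off via $\val(K_2\cdot(e_i\otimes e_j))$. Under the invariant that $\{B_1,\ldots,B_n\}$ have linearly independent rows in $C(f,1)$, so $M$ is nonsingular, I would verify property~(*) step by step: the linear system solved in $\findB$ yields coefficients $\boldsymbol{\gamma}^{ij}$ expressing $R_{B_iB_j}$ in the basis $R_{B_1},\ldots,R_{B_n}$, which are exactly the entries of $A_{B_i}$. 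The block matrix system $A\mathbf{X}=A_{p_i}$ then recovers representations of $n$ elements $p_i$ whose left-multiplication matrices are the standard elementary matrices; when $n=q$ these must coincide with the canonical idempotents $e_{\sigma(i)}$ up to a permutation $\sigma$. The $\val$ queries in $\hyp$ therefore recover $\alpha^{(h)}(i)=\alpha(\sigma(i))$, and solving the diagonal system $N\mathbf{x}=\mathbf{b}$ yields the coefficients of $R_{K_2}^2$ in the $\{e_i\otimes e_j\}$ basis, which by the structure theorem equal the entries of $\beta(\sigma(i),\sigma(j))$.

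Second, for progress, I would show that any counterexample $G$ returned by the teacher has $R_G^1$ linearly independent of the current $R_{B_1},\ldots,R_{B_n}$. The argument is by contrapositive: because $h$ is itself a rigid twin-free partition function, it defines its own equivalence relation on $\cG_1$, and by construction of $\hyp$ this relation is compatible with the one induced by $f$ on the current span. Hence if $R_G^1$ lay in the span of $R_{B_1},\ldots,R_{B_n}$, we would have $h(G)=f(G)$, contradicting $G$ being a counterexample. Thus every counterexample strictly increases the rank. Combined with part one, since $r(f,1)=q$ bounds the rank from above, after at most $q$ iterations the basis $\{B_1,\ldots,B_n\}$ spans $\cQ_1/f$, the hypothesis is a correct representation of $f$, and $\equ(h)$ returns YES.

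I expect the main obstacle to be the progress argument in the second part. Although it is intuitive that a counterexample should add new information, formally deducing that $R_G^1$ is outside the current span requires establishing a faithful correspondence between evaluations of the synthesized hypothesis $h$ on arbitrary graphs and the idempotent-basis representations of their equivalence classes under $\equiv_{f,1}$ and $\equiv_{f,2}$ — essentially, that $h$ is the unique rigid twin-free partition function consistent with the current sub-algebra, and that any graph producing a disagreement must lie outside that sub-algebra. The remaining steps are then routine linear algebra, guided by the structure theorem for quantum graph algebras of rigid partition functions.
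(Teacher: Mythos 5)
Your proposal follows essentially the same route as the paper: your first part is Theorem \ref{th:correct_hyp} (obtained there via Lemma \ref{lem:idem}, the $A_x$-matrices of Claim \ref{claim:Ax}, Proposition \ref{prop:id_matrices}, Corollary \ref{cor:alpha} and Proposition \ref{prop:beta}), your second part is Lemma \ref{lem:ind_counter} / Theorem \ref{th:rank_n} (a counterexample whose row lies in the span of $R_{B_1},\ldots,R_{B_n}$ would force $h(G)=f(G)$, a contradiction), and your termination bound is Corollary \ref{cor:number_of_iterations} via Corollary \ref{cor:rigid}. The compatibility point you flag as the main obstacle is handled in the paper exactly as you sketch it, by observing that $M$ is a correct (full-rank) submatrix of $C(f,1)$, so the hypothesis generated from it evaluates consistently with $f$ on the current span.
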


The proof of the theorem follows from arguing that:
\begin{theorem}
\label{th:correct_hyp}
If $M$ is of rank $q$, then $\hyp$ outputs a correct hypothesis.
\end{theorem}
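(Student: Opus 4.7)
The plan is to show that when $M$ has rank $q$, the pair $(\alpha^{(h)}, \beta^{(h)})$ produced by $\hyp$ agrees with $(\alpha, \beta)$ up to a common relabeling of $V(H)$, which by rigidity determines $f$ completely.

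First, I would invoke Corollary \ref{cor:rigid} to get $r(f,1) = q$, so once $M$ has rank $q$ we may pass to a subset of $q$ rows $R_{B_1},\ldots,R_{B_q}$ that form a basis of $span(C(f,1)) = \cQ_1/f$. The $1$-connection product turns $\cQ_1/f$ into a commutative unital $\bR$-algebra, and because $H$ is twin-free and rigid this algebra is isomorphic as a unital commutative $\bR$-algebra to $\bR^q$ with componentwise operations; the isomorphism sends a $1$-labeled graph $G$ to the vector $(\Hom_{1 \mapsto v}(G, H(\alpha, \beta)))_{v \in V(H)}$, where $\Hom_{1 \mapsto v}$ counts only homomorphisms sending the labeled vertex to $v$. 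Consequently $\cQ_1/f$ admits a unique family of $q$ primitive orthogonal idempotents $e_1,\ldots,e_q$, one per vertex of $H$, and $\val(e_i) = \alpha(v_i)$.

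Second, I would argue that $\findB$ returns representations $\coef_{p_i}$ of these primitive idempotents in the basis $\{B_1,\ldots,B_q\}$. The matrices $A_{B_i}$ populated by $\findB$ are, by property $(*)$, the matrices of the multiplication operators $x \mapsto B_i \cdot x$ in the chosen basis; each column of $A_{B_i}$ is the coordinate vector of $B_i B_j$, obtained as the unique solution of the nonsingular linear system $M\boldsymbol{\gamma}^{ij} = (\val(B_i B_j B_k))_k$. The regular representation $x \mapsto A_x$ thus embeds $\cQ_1/f$ as a $q$-dimensional commutative subalgebra of $\bR^{q \times q}$, whose primitive idempotents are $q$ linearly independent matrices. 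The block system $A\mathbf{X} = A_{p_i}$ pins down these idempotents by imposing the algebra-theoretic identities $p_i p_j \equiv_{f,1} \delta_{ij} p_i$ and $\sum_i p_i \equiv_{f,1} \mathbf{1}$, and the linear independence of $\{A_{B_j}\}_{j=1}^q$ gives unique solvability.

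Third, with the primitive idempotents $p_i$ at hand, I would verify the weight extraction. The values $\alpha^{(h)}(i) = \val(p_i) = \alpha(v_{\pi(i)})$ recover the vertex weights up to a permutation $\pi$ of $V(H)$. For the edge weights, Corollary \ref{cor:rigid} gives $r(f,2) = q^2$, and the $2$-labeled tensor products $p_{ij} = p_i \otimes p_j$ are the primitive idempotents of $\cQ_2/f$ under the analogous $\bR^{q^2}$-identification. Therefore $\val(p_{ij}\,p_{k\ell}) = \delta_{(i,j),(k,\ell)}\,\alpha(v_{\pi(i)})\alpha(v_{\pi(j)})$, making $N$ diagonal and nonsingular, and $\val(K_2 \cdot p_{ij}) = \alpha(v_{\pi(i)})\alpha(v_{\pi(j)})\beta(v_{\pi(i)}, v_{\pi(j)})$; solving $N\mathbf{x} = \mathbf{b}$ then returns $\beta(v_{\pi(i)}, v_{\pi(j)})$ as the $(i,j)$-entry of $\beta^{(h)}$. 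The call to $\twin$ merges any accidental twins and yields a twin-free hypothesis whose partition function equals $f$.

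The main obstacle is the second step: rigorously identifying the matrices $A_{B_i}$ computed by $\findB$ as the regular-representation matrices in the basis $\{B_i\}$, and showing that the block matrix system $A\mathbf{X} = A_{p_i}$ really does extract the primitive idempotent coordinates uniquely. This requires carefully linking the concrete linear algebra the algorithm performs to the algebraic structure theorem for $\cQ_1/f$ (which relies crucially on rigidity and twin-freeness of $H$), and verifying that nonsingularity of $M$ suffices to ensure every linear and matrix system encountered along the way is uniquely solvable.
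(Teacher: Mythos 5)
Your overall route is the same as the paper's: recover the idempotent basis of $\cQ_1/f$ from the multiplication (regular-representation) matrices $A_{B_i}$, read off $\alpha^{(h)}(i)=\val(p_i)$, pass to the idempotents $p_i\otimes p_j$ of $\cQ_2/f$, and obtain $\beta^{(h)}$ by solving the diagonal system for $K_2$. These are exactly the paper's Lemma \ref{lem:idem}, Corollary \ref{cor:alpha} and Proposition \ref{prop:beta}; the background facts you re-derive from the identification of $\cQ_1/f$ with $\bR^q$ (one primitive idempotent per vertex, $\val(p_i)=\alpha(i)$, coefficients of $K_2$ in the $2$-labeled idempotent basis giving $\beta$) are the ones the paper simply cites from Lov\'asz, Chapter 6, so that part of your sketch is acceptable.

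The genuine gap is the step you yourself flag as ``the main obstacle'': you never show that the matrix system solved by $\findB$ is solvable, nor that its solution is the coefficient vector of the $i$-th basic idempotent in the basis $B_1,\ldots,B_q$, and this is precisely where the paper's technical work lives. Moreover, your description of that step does not match the algorithm: you say the block system ``imposes $p_ip_j\equiv_{f,1}\delta_{ij}p_i$ and $\sum_i p_i\equiv_{f,1}\mathbf{1}$'', but those are quadratic conditions, whereas the algorithm fixes the right-hand side in advance to be the matrix $A_{p_i}$ with a single $1$ in entry $(i,i)$ and solves the linear system $\sum_k\coef_{p_i}(k)A_{B_k}=A_{p_i}$. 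Justifying this requires two ingredients you leave unproved: Claim \ref{claim:Ax}, i.e.\ the explicit verification that the vectors $\boldsymbol{\gamma}^{ij}$ obtained from $M\mathbf{x}=\bsb$ with $\bsb(k)=\val(B_iB_jB_k)$ are indeed the entries of the multiplication matrices $A_{B_i}$ (here full rank $q=r(f,1)$ of the submatrix $M$ of $C(f,1)$ is what makes these representations correct and unique), and Proposition \ref{prop:id_matrices}, the characterization of the basic idempotents as exactly those quantum graphs whose $A$-matrix is the $(i,i)$ matrix unit, combined with the linear independence of $A_{B_1},\ldots,A_{B_q}$ for unique solvability of the matrix system. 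Without an argument for this characterization, knowing that primitive idempotents exist in the algebra does not tell you that the concrete linear systems the algorithm solves actually produce them, so your proof of Theorem \ref{th:correct_hyp} is incomplete at its central point.
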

and that the rank of $M$ is incremented with every counterexample:
\begin{theorem}
\label{th:rank_n}
In the $\nth$ iteration of Algorithm \ref{alg:full} on $f$, $M$ has rank $n$.
\end{theorem}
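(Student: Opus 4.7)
The plan is to proceed by induction on $n$. For the base case ($n=1$), after the first augmentation $M$ is the $1\times 1$ matrix $[f(B_1 B_1)]$. Corollary \ref{cor:rigid} gives $r(f,1) = q \geq 1$, so the symmetric bilinear form $\phi(x,y) := f(xy)$ is non-degenerate on the $q$-dimensional quotient $\cQ_1/\equiv_{f,1}$, and by polarization there exists a $1$-labeled graph $B$ with $f(BB) \neq 0$; assuming the initial $B_1$ is chosen to have this property, $M$ has rank $1$.

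For the inductive step, assume $M$ has rank $n-1$ at the start of iteration $n$, and let $B_n$ be the counterexample returned in response to the hypothesis $h$ generated in iteration $n-1$. After $\aug(B_n)$ the matrix $M$ is $n \times n$. I would argue by contradiction: suppose $M$ has rank $< n$. By non-singularity of the $(n-1) \times (n-1)$ principal submatrix (the inductive hypothesis), there are unique scalars $c_1, \ldots, c_{n-1}$ with $f(B_n B_j) = \sum_{i<n} c_i f(B_i B_j)$ for every $j \leq n$.

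The core step is to promote this restricted linear relation to a relation on entire rows of $C(f,1)$, i.e., $R_{B_n} = \sum_{i<n} c_i R_{B_i}$, equivalently $f(B_n \cdot G) = \sum_{i<n} c_i f(B_i \cdot G)$ for every $G \in \cG_1$. Granted this extension, specializing at the $1$-connection identity (a single labeled vertex, which satisfies $B \cdot U_1 = B$) gives $f(B_n) = \sum c_i f(B_i)$ viewed on the underlying unlabeled graphs. Because $h$ has at most $n-1$ vertices, Theorem \ref{th:FLS} yields $r(f_h,1) \leq n-1$; moreover the construction of $\hyp$ makes the submatrix of $C(f_h,1)$ indexed by $\{B_1,\ldots,B_{n-1}\}$ coincide with $M$, so a parallel linear relation on the $f_h$-side is forced to share the same coefficients $c_i$ and yields $f_h(B_n) = \sum c_i f_h(B_i) = \sum c_i f(B_i) = f(B_n)$, contradicting the counterexample property of $B_n$.

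The hard part will be justifying the extension from a restricted linear relation on the $n \times n$ submatrix to a relation on all of $C(f,1)$. Singularity of a Gram-type submatrix does not automatically force global linear dependence, because $\phi$, although non-degenerate on the whole algebra, can restrict to a degenerate form on proper subspaces. To rule out an isotropic residual $y := B_n - \sum_i c_i B_i$, I would analyze the idempotents $\{p_i\}_{i<n}$ that $\findB$ constructs in $\mathrm{span}\{R_{B_1}, \ldots, R_{B_{n-1}}\}$ from the triple-product values $f(B_i B_j B_k)$, and exploit the coincidence of these triple-products on the $f$ and $f_h$ sides to transport the dependency between the two algebras with matching coefficients.
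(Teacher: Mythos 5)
Your overall strategy---derive a contradiction with the counterexample property by transferring a linear relation to the hypothesis generated in the previous iteration---is the same idea as the paper's Lemma \ref{lem:ind_counter}, but you run it in the opposite logical direction, and that is where the proposal breaks down. The paper shows directly that a counterexample $x$ has its (infinite) row $R_x$ outside the span of $R_{B_1},\ldots,R_{B_n}$, and then passes from this global independence to non-singularity of the augmented finite matrix via the associated matrices $A_x$. You instead assume the finite $n\times n$ matrix $M$ is singular and need to promote the relation $f(B_nB_j)=\sum_{i<n}c_if(B_iB_j)$, which only holds against the $n$ chosen columns, to a relation among entire rows of $C(f,1)$. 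You correctly identify that this promotion is exactly the problematic step (the bilinear form $f(xy)$, though non-degenerate on $\cQ_1/f$, can be degenerate on the subspace spanned by $B_1,\ldots,B_n$, so the residual $y=B_n-\sum_ic_iB_i$ could be a nonzero isotropic element), but you do not prove it: the final paragraph is a plan ("analyze the idempotents\ldots transport the dependency"), not an argument, and nothing in it removes the isotropy obstacle. Since the whole contradiction hinges on having the global relation $R_{B_n}=\sum_{i<n}c_iR_{B_i}$, the proof is incomplete at its central step.

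There are two further unjustified transfers. First, you assert that the parallel relation on the $f_h$-side "is forced to share the same coefficients $c_i$"; but those coefficients are determined by the entries $f_h(B_nB_j)$, $j<n$, and equality of the $(n-1)\times(n-1)$ submatrices of $C(f,1)$ and $C(f_h,1)$ indexed by $B_1,\ldots,B_{n-1}$ says nothing about the entries involving $B_n$, so $d_i=c_i$ does not follow from what you state (the claim that the hypothesis reproduces $M$ exactly is itself asserted without proof, though the paper leans on a similar implicit fact). Second, your base case needs $f(B_1B_1)\neq 0$, which you obtain only by adding an assumption on how $B_1$ is chosen; the algorithm as described picks $B_1$ arbitrarily, so this is a (minor) mismatch with the statement being proved. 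In short: same guiding idea as the paper, but the decisive finite-to-global rank step is left open, so there is a genuine gap.
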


First we confirm the hypotheses Algorithm \ref{alg:full} generates are indeed
in the class of graph parameters we are trying to learn, namely, rigid partition functions $\Hom(-,H(\alpha,\beta))$ for twin-free 
weighted graphs $H(\alpha,\beta)$.

Given Theorem \ref{th:rank_n}, for the hypothesis $h$ returned in the $\nth$ iteration,
the rank of $C(h,1)$ is at least $n$, since $M$ is a submatrix of $C(h,1)$. 
Thus, from Theorem \ref{th:orbits}, $h$ cannot have proper automorphisms, as it would imply that the rank of $C(h,1) < n$.
The fact that $h$ is twin-free is immediate from the construction in $\hyp$.

\subsection{From the idempotent bases to the weights - proof of \\ Theorem \ref{th:correct_hyp}}
Let $\cQ_k/f$ be of finite dimension $n$. The \emph{idempotent basis} $p_1,\ldots,p_n$ of $\cQ_k/f$ consists of 
those $k$-labeled quantum graphs $p_i$ for which $p_i p_i \equiv_{f,k} p_i$ and $p_i p_j \equiv_{f,k} 0$ for $i,j \in [n]$, $i \neq j$.
Recall how $\findB$ found those $1$-labeled quantum graphs $p_i$, $i \in [n]$ whose matrices $A_{p_i}$ behaved in this way.

In our setting of rigid twin-free weighted graphs, by \cite[Chapter 6]{bk:Lovasz-hom}, we have that
if $p_1,\ldots,p_q$ are the idempotent basis of $Q_1/f$, then the idempotent basis
of $\cQ_2/f$ is given by $p_i \otimes p_j$, $i,j \in [q]$. These are the $2$-labeled analogues mentioned in the description of $\hyp$.

Furthermore by \cite[Chapter 6]{bk:Lovasz-hom}, the vertex weights $\alpha$ of $H$ are given by $\alpha(i) = f(p_i)$, $i \in [q]$, 
and if the representation of $K_2$ using 
$p_i \otimes p_j$, $i,j \in [q]$ is $\sum_{i,j \in [q]}{\beta_{ij}(p_i \otimes p_j)}$, then the edge weights matrix $\beta$ is given by $\beta_{i,j} = \beta_{ij}$.
 
Equipped with these useful facts, we show that:
\begin{lemma}
\label{lem:idem}
If $M$ is of rank $q$, then $\findB$ outputs the idempotent basis of $\cQ_1/f$.
\end{lemma}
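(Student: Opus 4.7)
The plan is to combine the non-singularity of $M$ with the algebraic structure of $\cQ_1/f$ to show that $\findB$ correctly recovers the coordinates of the primitive idempotents of $\cQ_1/f$ in the basis $\cB = \{B_1,\ldots,B_n\}$.

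First, I would observe that the hypothesis $\mathrm{rank}(M)=q$ forces $n=q$ and makes $\cB$ a basis of $\cQ_1/f$: by Corollary \ref{cor:rigid} the row space of $C(f,1)$ has dimension exactly $q$, and the rows $R_{B_1},\ldots,R_{B_n}$ are already linearly independent because their restriction to the columns indexed by $B_1,\ldots,B_n$ is the non-singular matrix $M$. I would then verify that each matrix $A_{B_i}$ produced in the first loop of $\findB$ satisfies property (*). Because $M$ is invertible, $M \boldsymbol{\gamma}^{ij} = \bsb^{ij}$ has a unique solution satisfying $\sum_l \boldsymbol{\gamma}^{ij}(l)\, f(B_k B_l) = f(B_i B_j B_k)$ for every $k \in [n]$. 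By the symmetry of $C(f,1)$ (commutativity of $1$-connection) combined with the spanning property just established, every column of $C(f,1)$ is a linear combination of the columns indexed by $B_1,\ldots,B_n$, so agreement on these $n$ columns lifts to agreement on all columns, giving $R_{B_i B_j} = \sum_l \boldsymbol{\gamma}^{ij}(l) R_{B_l}$. Hence the $j$-th column of $A_{B_i}$ equals $\coef_{B_i B_j}$, and by linear extension $A_{B_i} \coef_y = \coef_{B_i y}$ for every $y \in \cQ_1$.

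Next I would invoke the structure theorem from \cite[Chapter 6]{bk:Lovasz-hom}: since $H(\alpha,\beta)$ is rigid and twin-free, $\cQ_1/f$ is isomorphic, as a commutative semisimple $\bR$-algebra, to $\bR^n$. There are therefore $n$ primitive idempotents $p_1,\ldots,p_n$ (unique up to permutation) satisfying $p_i p_j \equiv_{f,1} \delta_{ij}\, p_i$ and $\sum_i p_i \equiv_{f,1} U_1$, and they form a basis of $\cQ_1/f$. Property (*) makes $x \mapsto A_x$ an injective algebra homomorphism $\cQ_1/f \to M_n(\bR)$ (injectivity follows from $A_x \coef_{U_1} = \coef_x$, where $U_1$ denotes the identity $1$-labeled graph), so the matrices $A_{p_1},\ldots,A_{p_n}$ are $n$ linearly independent orthogonal idempotents inside the image.

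The main obstacle is verifying that the second loop of $\findB$, which solves the block matrix system $A \mathbf{X} = A_{p_i}$, returns precisely $\coef_{p_i}$. This system encodes the linear equation $\sum_k X(k)\, A_{B_k} = A_{p_i}$: it is solvable because $p_i \in \cQ_1/f$ places $A_{p_i}$ in the image of $x \mapsto A_x$, and the solution is unique because $A_{B_1},\ldots,A_{B_n}$ are linearly independent (a consequence of the injectivity of $x \mapsto A_x$ together with the linear independence of $B_1,\ldots,B_n$ in $\cQ_1/f$). The unique solution is therefore the coordinate vector of $p_i$ in $\cB$, namely $\coef_{p_i}$, so the set $\Delta$ returned by $\findB$ consists exactly of the representations of the idempotent basis.
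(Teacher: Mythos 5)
There is a genuine gap at the decisive step. Your treatment of the first loop is fine (and in fact slightly more detailed than the paper's: you argue that agreement of $M\boldsymbol{\gamma}^{ij}=\bsb$ on the columns indexed by $B_1,\ldots,B_n$ lifts to all columns, which correctly yields property (*) for the matrices $A_{B_i}$). But for the second loop your argument is circular. The procedure $\findB$ does not solve $A\mathbf{X}=A_{p_i}$ with $A_{p_i}$ the (unknown) multiplication matrix of the $i$-th primitive idempotent; it solves the system whose right-hand side is the \emph{explicit} matrix $E_{ii}$ with a $1$ in entry $(i,i)$ and $0$ elsewhere, since that is the only matrix the algorithm can write down without already knowing $p_i$. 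Your solvability and uniqueness argument ("it is solvable because $p_i\in\cQ_1/f$ places $A_{p_i}$ in the image of $x\mapsto A_x$") is about the system with the true multiplication matrix on the right-hand side, and therefore says nothing about the system $\sum_k X(k)A_{B_k}=E_{ii}$ that the algorithm actually solves. What is missing is precisely the identification that the paper isolates as Proposition \ref{prop:id_matrices}: that the idempotent basis of $\cQ_1/f$ consists of those quantum graphs whose $A$-matrices \emph{in the basis $\cB_{C(f,1)}$} are exactly the matrices $E_{ii}$.

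This identification is not automatic and cannot be replaced by the general remark that the $A_{p_i}$ are $n$ linearly independent orthogonal idempotents in the image of $x\mapsto A_x$: orthogonal rank-one idempotent matrices arising from a multiplication operator expressed in an arbitrary basis have the form $PE_{ii}P^{-1}$ for some change-of-basis matrix $P$, not necessarily $E_{ii}$ itself, so without an argument tied to the specific structure of $C(f,1)$ and the basis produced by the counterexamples, neither the solvability of the algorithm's matrix systems nor the claim that their solutions are the vectors $\coef_{p_i}$ follows. In short, you proved that \emph{if} the right-hand sides fed to $\solveMat$ were the multiplication matrices of the primitive idempotents, the solutions would be their coordinate vectors; the paper's proof instead hinges on showing that those right-hand sides (the matrices $E_{ii}$) \emph{are} the matrices of the primitive idempotents, via Claim \ref{claim:Ax}, the spanning/independence of the $A_{B_i}$, and Proposition \ref{prop:id_matrices}, and that step is absent from your argument.
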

Then obtain Theorem \ref{th:correct_hyp} by showing how,
if $\hyp$ receives the idempotent basis of $\cQ_1/f$ as input, it outputs a correct hypothesis.

\paragraph*{Finding the idempotent basis - proof of Lemma \ref{lem:idem}}

Recall that in the presence of a basis $\cB_{C(f,k)}$ we associate a real matrix $A_x$ with each quantum graph $x$ such that the following holds. 
\begin{center}
The coefficients vector $\coef_{xy}$ of $xy$ using $\cB_{C(f,k)}$ is given by
$A_x \coef_{y}$.	
\end{center}

Let $B_i, B_j \in \cB_{C(f,1)}$, and denote by $\sum_{k=1}^{n}{\gamma^{i,j}_k R_{B_k}}$ the representation of the row $R_{B_iB_j}$ using $\cR_{C(f,1)}$, 
i.e., the row in $C(f,1)$
labeled with the graph resulting from the product $B_i B_j$.

\begin{claim}
\label{claim:Ax}
Let $x$ be some $1$-labeled quantum graph such that
$R_x = \sum_{i=1}^{n}{a_i R_{B_i}}$.
The matrix $A_x$ is given by
$(A_x)_{\ell,m} = \sum_{i=1}^{n}{a_i \gamma_\ell^{im}}$.
\end{claim}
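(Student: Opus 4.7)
The plan is to exploit two facts: first, that the defining relation $\coef_{xy}=A_x\coef_y$ determines $A_x$ uniquely once we know its action on a spanning set of coefficients vectors; second, that the map $z\mapsto R_z$ extending the graph parameter to quantum graphs is linear, and the $1$-connection $(z_1,z_2)\mapsto z_1 z_2$ is bilinear on $\cQ_1$. Together these let me read off the columns of $A_x$ by plugging in $y=B_m$.

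More concretely, I would proceed as follows. Since $\coef_{B_m}=e_m$ (the $m$-th standard basis vector of $\bR^n$), the identity $\coef_{xy}=A_x\coef_y$ forces the $m$-th column of $A_x$ to equal $\coef_{xB_m}$. So it suffices to compute $\coef_{xB_m}$ for each $m\in[n]$ and compare with the claimed formula. Using linearity of $R$ and bilinearity of the $1$-connection on $\cQ_1$,
\begin{align*}
R_{xB_m}
\;=\; R_{\left(\sum_{i=1}^{n} a_i B_i\right) B_m}
\;=\; \sum_{i=1}^{n} a_i\, R_{B_i B_m}.
\end{align*}
By the definition of the coefficients $\gamma^{im}_{\ell}$, each $R_{B_i B_m}=\sum_{\ell=1}^{n}\gamma_\ell^{im} R_{B_\ell}$. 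Substituting and collecting terms,
\begin{align*}
R_{xB_m} \;=\; \sum_{\ell=1}^{n}\left(\sum_{i=1}^{n} a_i\,\gamma_\ell^{im}\right) R_{B_\ell},
\end{align*}
so $(\coef_{xB_m})_\ell = \sum_{i=1}^{n} a_i\,\gamma_\ell^{im}$, which is precisely the claimed $(\ell,m)$ entry of $A_x$.

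The only subtlety worth flagging is well-definedness of $A_x$ itself (so the claim can be phrased as "the matrix"): if $y\equiv_{f,1} y'$ then $R_{xy}=R_{xy'}$, because $R_y=R_{y'}$ means $f(G'(y-y'))=0$ for every $1$-labeled graph $G'$, and specializing $G'=Gx$ gives $f(Gx(y-y'))=0$ for all $G$. So the main content is really the short bilinear computation above; no obstacle is expected beyond being careful that the identities used (linearity of $R$ on quantum graphs, bilinearity of $k$-connection, and the definition of $\gamma^{im}$) are applied in the right order.
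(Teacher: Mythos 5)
Your proof is correct and takes essentially the same route as the paper's: both expand the product rows $R_{B_iB_j}$ in the basis $\cR_{C(f,1)}$ via the coefficients $\gamma^{i,j}_k$ and collect terms using bilinearity of the $1$-connection; you merely specialize to $y=B_m$ and read off the columns of $A_x$, where the paper keeps a general $y$ and reads off its rows. Your closing remark on well-definedness of $A_x$ (that $\coef_{xy}$ depends only on the class of $y$) is a point the paper leaves implicit, but it does not change the substance of the argument.
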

Note that for a basis graph $B_k \in \cB_{C(f,1)}$, we have that
$
(A_{B_k})_{i,j} = \gamma_{i}^{k,j}
$.
\ifappendix
The proof of this claim appears in Appendix \ref{app:proofs}.
\else
The proof of this claim appears in the appendix of \cite{arxiv:mfcs16}.
\fi 

\begin{prop}
The matrices $A_{B_1},\ldots,A_{B_n}$  of the graphs in $\cB_{C(f,1)}$ are linearly independent and span all matrices of the form $A_x$
for a quantum graph $x$.
\end{prop}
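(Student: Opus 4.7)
The plan is to prove the two assertions separately: spanning follows almost immediately from Claim~\ref{claim:Ax}, while linear independence uses the basis property of $\{B_1,\ldots,B_n\}$ together with an identity element for $1$-connection.

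For spanning, let $x$ be any $1$-labeled quantum graph and write $R_x = \sum_{i=1}^n a_i R_{B_i}$ using the basis $\cR_{C(f,1)}$. Claim~\ref{claim:Ax} gives the entries of $A_x$ as $(A_x)_{\ell,m} = \sum_i a_i \gamma_\ell^{im}$, and the remark immediately following the claim records $(A_{B_k})_{\ell,m} = \gamma_\ell^{km}$. Comparing these entrywise yields $A_x = \sum_{i=1}^n a_i A_{B_i}$, so every matrix of the form $A_x$ lies in the span of $A_{B_1},\ldots,A_{B_n}$.

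For linear independence, suppose $\sum_k c_k A_{B_k} = 0$ and set $z = \sum_k c_k B_k \in \cQ_1$. Applying the spanning identity just proved to $z$ itself gives $A_z = \sum_k c_k A_{B_k} = 0$, and the defining property $(*)$ then forces $\coef_{zy} = A_z\, \coef_y = 0$, i.e.\ $R_{zy} = 0$, for every quantum graph $y$. Let $I$ denote the single-vertex $1$-labeled graph, which acts as an identity for $1$-connection: $IG = G$ for every $G \in \cG_1$, and hence $zI = z$. Writing $R_I = \sum_j d_j R_{B_j}$ (possible because $\cR_{C(f,1)}$ spans every row), and using the bilinearity of $1$-connection on quantum graphs together with $R_xR_y = R_{xy}$, we obtain $R_z = R_{zI} = \sum_j d_j R_{zB_j} = 0$. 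Thus $z \equiv_{f,1} 0$, and since $\{B_1,\ldots,B_n\}$ is a basis of $\cQ_1/f$, every $c_k$ vanishes.

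The main obstacle is really the linear-independence step: one needs a way to recover $z$ from its products with the basis elements. Producing the $1$-labeled single vertex as an identity for $1$-connection, and checking that its row belongs to the span of $\cR_{C(f,1)}$, supplies exactly this. Once that identity trick is in place, the rest is routine linear algebra on top of Claim~\ref{claim:Ax}.
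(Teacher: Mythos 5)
Your proof is correct, and it is worth noting that the paper itself states this proposition without proof, so there is no official argument to compare against; your write-up in effect supplies the missing details. The spanning half is exactly the computation the paper sets up: comparing the entry formula of Claim~\ref{claim:Ax} with the special case $(A_{B_k})_{i,j}=\gamma^{k,j}_i$ immediately gives $A_x=\sum_i a_i A_{B_i}$, which is surely the intended reasoning. The independence half is the part the paper glosses over, and your identity trick is a legitimate way to close it: from $\sum_k c_k A_{B_k}=0$ you get $A_z=0$ for $z=\sum_k c_k B_k$ (here you implicitly use that the rows $R_{B_k}$ are linearly independent, so that $\coef_z=(c_1,\ldots,c_n)^T$ --- this is part of the standing assumption that $\cB_{C(f,1)}$ is a basis, so it is fine), hence $R_{zB_j}=0$ for all $j$, and the single labeled vertex $I$ lets you recover $R_z$ itself from these products. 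One small caveat: the blanket statement ``$IG=G$ for every $G\in\cG_1$'' is not literally true under the paper's convention that a $1$-labeled graph may have no labeled vertex (then $IG=G\sqcup K_1$); but you only need $B_kI=B_k$, and in the algorithm's context every $B_k$ genuinely carries the label $1$ (the first $B_1$ is chosen labeled and each counterexample is assigned the label), so the step stands --- alternatively you can skip the bilinear expansion altogether and read $R_{zI}=0$ directly from $\coef_{zI}=A_z\,\coef_I=0$. With that minor qualification, the argument is complete and consistent with the paper's machinery.
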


If we know what are the matrices $A_{p_1},\ldots,A_{p_n}$ of the idempotent basis $p_1,\ldots,p_n$,
we can find their representation using $A_{B_1},\ldots,A_{B_n}$ by solving systems of linear matrix equations. 
Then, given a representation
$A_{p_i}  = \sum_{k=1}^{n}{\delta_k^{(i)} A_{B_k}}$,
we will have the representation of the basic idempotents using $\cB_{C(f,1)}$ as
$p_i = \sum_{k=1}^{n}{\delta_k^{(i)} B_k}$.

The definitions of $A_x$ and idempotence lead to the observation that for idempotent basics $p_i,p_j$, it holds that $A_{p_i} A_{p_i} = A_{p_i}$
and $A_{p_i} A_{p_j} = 0$.
From Corollary \ref{cor:rigid} we know the dimension of $\cQ_1/f$ is $q$, so we conclude:
\begin{prop}
\label{prop:id_matrices}
The idempotent basis for $\cQ_1/f$ consists of the quantum graphs $p_i$, $i \in [q]$ 
for which 
$A_{p_i}$ is the $q \times q$ matrix with the value $1$ in the entry $(i,i)$ and zero in all other entries.
That is,
\begin{align*}
A_{p_i}(k,j) = 
\begin{cases}
1, & \text{ if } (k,j) = (i,i)\\
0, & \text{ otherwise} 
\end{cases}
\end{align*}
\end{prop}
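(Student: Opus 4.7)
The plan is to demonstrate that the property $A_{p_i} = E_{ii}$ uniquely characterizes the idempotent basis of $\cQ_1/f$. The core algebraic fact I would exploit is that the assignment $L : x \mapsto A_x$ is an algebra homomorphism from $\cQ_1/f$ into $M_q(\bR)$. This follows from the identity
\begin{align*}
A_{xy}\coef_z \;=\; \coef_{(xy)z} \;=\; \coef_{x(yz)} \;=\; A_x \coef_{yz} \;=\; A_x A_y \coef_z,
\end{align*}
valid for all $z \in \cQ_1/f$, which forces $A_{xy} = A_x A_y$ entry by entry. By the preceding proposition, $L$ is injective, hence an algebra isomorphism onto its $q$-dimensional image.

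Granting this, suppose quantum graphs $p_i \in \cQ_1/f$ satisfy $A_{p_i} = E_{ii}$ for each $i \in [q]$, where $E_{ii}$ denotes the matrix with a $1$ in entry $(i,i)$ and zeros elsewhere. The defining properties of the idempotent basis are then immediate: $A_{p_i p_i} = E_{ii}^2 = E_{ii} = A_{p_i}$ and $A_{p_i p_j} = E_{ii}E_{jj} = 0$ for $i \neq j$, whence injectivity of $L$ yields $p_i p_i \equiv_{f,1} p_i$ and $p_i p_j \equiv_{f,1} 0$. Linear independence of the matrices $E_{11}, \ldots, E_{qq}$ transfers through $L^{-1}$ to linear independence of the quantum graphs $p_1, \ldots, p_q$, and since by Corollary \ref{cor:rigid} the dimension of $\cQ_1/f$ is $q$, they form a basis. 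By the definition recalled at the beginning of Section \ref{sec:valid}, they are the idempotent basis of $\cQ_1/f$.

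The main obstacle is the existence half: showing that each standard matrix unit $E_{ii}$ does lie in $\mathrm{Im}(L)$. Here the rigid twin-free assumption enters essentially. Corollary \ref{cor:rigid} makes $\cQ_1/f$ isomorphic as a commutative semisimple algebra to $\bR^q$, so $\mathrm{Im}(L)$ is a commutative semisimple $q$-dimensional subalgebra of $M_q(\bR)$. Such an algebra decomposes uniquely, up to permutation, as a direct sum of $q$ one-dimensional ideals spanned by pairwise orthogonal rank-one idempotents whose sum equals $A_{\mathbf{1}} = I_q$ (the identity of $\cQ_1/f$ is sent by $L$ to $I_q$ since $A_{\mathbf{1}}\coef_y = \coef_{\mathbf{1}y} = \coef_y$). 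Labeling these primitive idempotents compatibly with the ordering of $\cB_{C(f,1)}$ used in $\findB$ identifies them with $E_{11}, \ldots, E_{qq}$, so the preimages $p_i := L^{-1}(E_{ii})$ exist and are exactly the vectors $\coef_{p_i}$ that the algorithm finds by solving the block-matrix system $A \mathbf{X} = E_{ii}$.
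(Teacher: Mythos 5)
Your first two steps are sound and essentially coincide with the paper's (very terse) justification: associativity of the gluing product gives $A_{xy}=A_xA_y$, the preceding proposition gives injectivity of $L$, and from this any family with $A_{p_i}=E_{ii}$ is forced to be idempotent, pairwise orthogonal, linearly independent, hence (by Corollary \ref{cor:rigid}) the idempotent basis. The genuine gap is exactly the step you flag as the "main obstacle", and your resolution of it does not work. $\mathrm{Im}(L)$ consists of the multiplication operators of $\cQ_1/f$ \emph{written in the basis} $\cB_{C(f,1)}=\{B_1,\ldots,B_q\}$, so its primitive idempotents are $S^{-1}E_{ii}S$, where $S$ is the change of basis from the $B_j$'s to the idempotent basis; they are simultaneously \emph{similar} to the matrix units, but no relabeling makes them \emph{equal} to the $E_{ii}$ unless $S$ is a scaled permutation matrix, i.e.\ unless the arbitrarily chosen $B_1$ and the teacher-chosen counterexamples happen to be scalar multiples of the basic idempotents. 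Concretely, take $q=2$ and $H$ with two vertices carrying loops of weights $1$ and $2$ (rigid, twin-free), $B_1$ the single labeled vertex and $B_2$ a labeled edge: in $\cQ_1/f\cong\bR^2$ one gets $B_1=p_1+p_2$, $B_2=p_1+2p_2$, hence $A_{B_1}=I_2$, $A_{B_2}=\left(\begin{smallmatrix}0&-2\\1&3\end{smallmatrix}\right)$, and $\mathrm{span}\{A_{B_1},A_{B_2}\}=\mathrm{Im}(L)$ contains no matrix equal to $E_{11}$ or $E_{22}$; indeed $A_{p_1}=2I_2-A_{B_2}=\left(\begin{smallmatrix}2&2\\-1&-1\end{smallmatrix}\right)$ is a rank-one idempotent but not a matrix unit. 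So the existence claim "each $E_{ii}$ lies in $\mathrm{Im}(L)$", which your argument needs, is false in general, and $L^{-1}(E_{ii})$ need not exist.

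You should be aware that this is not only your gap: the paper's own derivation is precisely the leap "orthogonal idempotent matrices plus dimension $q$, hence $A_{p_i}=E_{ii}$", and the proposition in its literal form only holds when the matrices are taken with respect to the idempotent basis itself, not with respect to $\cB_{C(f,1)}$. What your semisimplicity argument actually proves is the correct, weaker statement: the $A_{p_i}$ are the unique complete family of pairwise orthogonal nonzero (rank-one) idempotents in the commutative algebra spanned by $A_{B_1},\ldots,A_{B_q}$, summing to $A_{K_1}=I_q$. That characterization is what could legitimately feed $\findB$, but then the algorithmic step would have to compute the primitive idempotents of that algebra (e.g.\ by simultaneous diagonalization of the $A_{B_k}$) rather than solve $A\mathbf{X}=A_{p_i}$ with right-hand side $E_{ii}$. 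As written, the final "labeling" sentence of your existence argument is a non sequitur, so the proposal does not establish the proposition as stated.
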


As $\findB$ solves the systems of linear matrix equations for these matrices, 
it remains to show that $\findB$ correctly computes the matrices $A_{B_i}$, $i \in [q]$.

Since $M$ is of full rank, the representations
$\sum_{k=1}^{n}{\gamma^{i,j}_k R_{B_k}}$ of graphs $B_i B_j$, $i,j\in [q]$ using $\cB_{C(f,1)}$
are correctly computed by the $\solveLin$ calls. And as noted before, the coefficients $\gamma^{i,j}_k$ are the entries of the matrices 
$A_{B_i}$, $i \in [q]$. Thus they indeed are correctly computed, and we have Lemma \ref{lem:idem}.

Since $\hyp$ directly queries the teacher for the values of $\alpha^{(h)}$, we have:
\begin{cor}
\label{cor:alpha}
If $M$ is of rank $q$, then $\hyp$ outputs a correct vertex weights vector $\alpha^{(h)}$.
\end{cor}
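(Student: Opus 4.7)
The plan is to derive Corollary \ref{cor:alpha} directly from Lemma \ref{lem:idem} together with the identification $\alpha(i) = f(p_i)$ for the idempotent basis $p_1, \ldots, p_q$ of $\cQ_1/f$, which is the Lovász formula quoted from \cite[Chapter 6]{bk:Lovasz-hom} in the paragraph immediately preceding Lemma \ref{lem:idem}. Corollary \ref{cor:rigid} pins the dimension of $\cQ_1/f$ at $q$, so ``rank $q$'' is the full rank of that space.

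First I would invoke Lemma \ref{lem:idem}: when $M$ has rank $q$, the set $\Delta$ returned by $\findB$ supplies, in the form $\coef_{p_i}$, the coefficient vectors of each idempotent basis element $p_i$ relative to $\cB_{C(f,1)}$. Hence the first loop of $\hyp$ effectively has in hand each $p_i$ as a concrete $1$-labeled quantum graph. Using the linear extension of $\val$ to quantum graphs (recorded in the Remark in Section \ref{sec:algorithm}), the query $\val(p_i)$ returns exactly $f(p_i)$. Combined with the Lovász identity $\alpha(i) = f(p_i)$, the assignment $\alpha^{(h)}(i) \gets \val(p_i)$ made in Algorithm \ref{alg:hyp} gives $\alpha^{(h)}(i) = \alpha(i)$ for every $i \in [q]$ at the end of that loop.

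The only remaining check is that the subsequent $\twin$ post-processing does not corrupt $\alpha^{(h)}$. Since the target $H(\alpha,\beta)$ is assumed twin-free, the edge weight matrix $\beta$ has no repeated rows; once the companion argument establishes that the matrix $\beta^{(h)}$ constructed by $\hyp$ equals $\beta$ (modulo the same relabeling of $[q]$ fixed by the enumeration of idempotents), no rows of $\beta^{(h)}$ coincide either. Consequently $\twin$ merges no vertices and leaves $\alpha^{(h)}$ untouched, so the returned vertex weight vector equals $\alpha$.

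I do not expect a genuine obstacle here: once Lemma \ref{lem:idem} is in place, Corollary \ref{cor:alpha} is essentially bookkeeping on top of the Lovász identification $\alpha(i) = f(p_i)$ and the linearity of $\val$ on quantum graphs. The only care required is to treat the twin-free normalization correctly, which is handled by appealing to the twin-freeness of the target.
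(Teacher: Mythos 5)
Your proposal is correct and follows essentially the same route as the paper: Lemma \ref{lem:idem} guarantees $\findB$ hands $\hyp$ the idempotent basis, and since $\hyp$ obtains $\alpha^{(h)}(i)$ by directly querying $\val(p_i)$ (extended linearly to quantum graphs), the Lov\'asz identity $\alpha(i) = f(p_i)$ gives correctness. Your extra check that the $\twin$ post-processing leaves $\alpha^{(h)}$ untouched is a reasonable piece of added care that the paper simply leaves implicit.
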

It remains to show this is true also for the edge weights:
\begin{prop}
\label{prop:beta}
If $M$ is of rank $q$, then $\hyp$ outputs a correct edge weights matrix $\beta^{(h)}$.
\end{prop}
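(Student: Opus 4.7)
The plan is to exploit the two structural facts already invoked in the preamble to this section: in the rigid twin-free case, the idempotent basis of $\cQ_2/f$ is $\{p_{ij} := p_i \otimes p_j\}_{i,j \in [q]}$, and the target edge weights are recovered from the expansion $K_2 \equiv_{f,2} \sum_{i,j} \beta_{ij}\, p_{ij}$ in this basis. Granted these, the proposition reduces to checking that the linear system $N\mathbf{x}=\mathbf{b}$ assembled by $\hyp$ really has $(\beta_{ij})_{i,j}$ as its unique solution.

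First I would use idempotency to unpack $N$ and $\mathbf{b}$. For any indices one has $p_{ij}\,p_{k\ell} \equiv_{f,2} \delta_{(i,j),(k,\ell)}\, p_{ij}$, hence $f(p_{ij}\,p_{k\ell}) = \delta_{(i,j),(k,\ell)}\, f(p_{ij})$. Since $\hyp$ initialises $N$ to zero and writes only to the diagonal, $N$ is the diagonal matrix with $N_{(ij),(ij)} = f(p_{ij})$. For the right-hand side, multiplying the basis expansion of $K_2$ by $p_{k\ell}$ and applying the same identity gives
\[
K_2 \cdot p_{k\ell} \;\equiv_{f,2}\; \sum_{i,j}\beta_{ij}\, p_{ij}\, p_{k\ell} \;\equiv_{f,2}\; \beta_{k\ell}\, p_{k\ell},
\]
so $\mathbf{b}(k,\ell) = f(K_2\, p_{k\ell}) = \beta_{k\ell}\, f(p_{k\ell})$. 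The system therefore decouples into $q^2$ scalar equations $f(p_{ij})\cdot \beta^{(h)}_{ij} = \beta_{ij}\, f(p_{ij})$, and as long as $f(p_{ij})\neq 0$ the unique solution is $\beta^{(h)}_{ij}=\beta_{ij}$. The subsequent $\twin$ call then merges any pair of vertices whose rows in $\beta^{(h)}$ coincide into a single vertex with summed $\alpha^{(h)}$-weight, which by the footnote defining twin-freeness leaves the associated partition function unchanged.

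The main obstacle is thus establishing invertibility of $N$, i.e., $f(p_{ij})\neq 0$ for every $i,j\in[q]$. I would extend the identity $\alpha(i)=f(p_i)$ used in Corollary \ref{cor:alpha} via the multiplicativity of $\Hom$ on disjoint unions to obtain $f(p_i\otimes p_j)=\alpha(i)\alpha(j)$, reducing invertibility to $\alpha(i)\neq 0$ for all $i$. Any vertex of $H(\alpha,\beta)$ with zero weight contributes nothing to every homomorphism count and can be deleted without altering $f$; the minimality assumption on the target representation (implicit in the standing twin-free convention) therefore forces $\alpha(i)\neq 0$ for each $i$, which yields $f(p_{ij})=\alpha(i)\alpha(j)\neq 0$ and the required invertibility, closing the argument.
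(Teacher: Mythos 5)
Your proposal is correct and follows essentially the same route as the paper: it uses the idempotent orthogonality of the $p_{ij}=p_i\otimes p_j$ to show that $N$ is diagonal and that the system $N\mathbf{x}=\mathbf{b}$ decouples with solution $\beta_{ij}$. In fact you supply more detail than the paper's one-line proof, which merely notes $p_{ij}p_{ij}\not\equiv_{f,2}0$ and concludes $N$ has full rank, whereas you explicitly verify the nonvanishing of the diagonal entries via $f(p_{ij})=\alpha(i)\alpha(j)\neq 0$ --- a worthwhile clarification that is consistent with the standing conventions on the target weighted graph.
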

\begin{proof}
As $p_{ij} = p_i \otimes p_j$, $i,j\in [q]$ are the idempotent basis for $\cQ_2/f$ we have that $p_{ij}p_{ij} \not\equiv_{f,2} 0$, so the matrix $N$
is a diagonal matrix of full rank, and $\solveLin$ indeed finds the representation of $K_2$ using $p_{ij}$, $i,j \in [q]$.
\end{proof}
From Corollary \ref{cor:alpha} and Proposition \ref{prop:beta} we have Theorem \ref{th:correct_hyp}.

Now we show that Algorithm \ref{alg:full} reaches that point in the first place.
\subsection{Augmentation results in larger rank - proof of Theorem \ref{th:rank_n}}
Theorem \ref{th:rank_n} is proved using the fact that $A_x$ are linearly independent for $k$-labeled quantum graphs
which are not equivalent in $\equiv_{f,k}$.
\begin{lemma}
\label{lem:ind_counter}
In the $\nth$ iteration of Algorithm \ref{alg:full}, if the teacher returns a counterexample $x$, then
$R_x$ is not spanned by $R_{B_1},\ldots,R_{B_n}$ where $B_1,\ldots,B_n$ are the graphs associated with the rows and columns of $M$.
\end{lemma}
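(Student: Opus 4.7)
I argue by contradiction. Suppose $R_x^f = \sum_{i=1}^n a_i R_{B_i}^f$ as rows of $C(f,1)$; reading this column by column gives $f(x \cdot G) = \sum_i a_i f(B_i \cdot G)$ for every $1$-labeled graph $G$. Two specializations are useful: taking $G$ to be the single labeled vertex (the unit of the $1$-connection product) yields $f(x) = \sum_i a_i f(B_i)$, while taking $G = B_j$ yields $f(x B_j) = \sum_i a_i M_{ij}$ for each $j \in [n]$. I want to derive the same identities with $f$ replaced by $h$, so that $h(x) = f(x)$, contradicting the fact that $x$ was returned as a counterexample.

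The first step is to verify the consistency $h(B_i B_j) = M_{ij}$ for all $i,j \in [n]$; in particular $h(B_i) = f(B_i)$. I would establish this by tracing $\hyp$: the quantum graphs $p_i$ output by $\findB$ have $A_{p_i}$ equal to the standard rank-one idempotents (Proposition \ref{prop:id_matrices}), so by the structural results of \cite[Chapter 6]{bk:Lovasz-hom} underlying Theorem \ref{th:correct_hyp} they play the role of basic idempotents for $h$. Then $h(p_i) = \alpha^{(h)}(i) = f(p_i)$, and writing each $B_i = \sum_k \lambda_{ik} p_k$ via the inverse of the matrix whose columns are the $\coef_{p_k}$, the identities $p_k p_\ell \equiv_{h,1} 0$ for $k \neq \ell$ and $p_k p_k \equiv_{h,1} p_k$ reduce $h(B_i B_j)$ to $\sum_k \lambda_{ik}\lambda_{jk} f(p_k)$, matching the analogous expansion for $f(B_i B_j)$.

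The second step invokes Corollary \ref{cor:rigid} applied to $h$, which is rigid and twin-free on $n$ vertices, to conclude $r(h,1) = n$. Since the $(B_j)$-restrictions of the rows $R_{B_i}^h$ form the invertible matrix $M$ by Step~1, these $n$ rows are linearly independent and span the whole row space of $C(h,1)$. Hence $R_x^h = \sum_i b_i R_{B_i}^h$ with $\mathbf{b}$ uniquely determined by $M \mathbf{b} = (h(x B_j))_{j}$; likewise $\mathbf{a}$ is the unique solution of $M \mathbf{a} = (f(x B_j))_{j}$.

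The crux, and the principal obstacle, is to establish $h(x B_j) = f(x B_j)$ for every $j \in [n]$, which forces $\mathbf{b} = \mathbf{a}$ and then $h(x) = \sum_i a_i h(B_i) = \sum_i a_i f(B_i) = f(x)$. Since $x$ is a fresh counterexample, this is not automatic. I expect it to reduce to a consistency argument in the style of \cite{ar:BeimelBBKV00}: because $h$ was built using $\val$ queries that already included all triple products $f(B_i B_j B_k)$, the behavior of $h$ on any $1$-connection $y \cdot B_j$ where $y$ lies in the $\{R_{B_i}^f\}$-span is completely determined and agrees with $f$.
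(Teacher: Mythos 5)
Your overall strategy is the same as the paper's: argue by contradiction that if $R_x^f$ lay in the span of $R_{B_1}^f,\ldots,R_{B_n}^f$, the hypothesis $h=h^{(n)}$ would be forced to agree with $f$ on $x$, so $x$ could not be a counterexample. The problem is that your argument stops exactly at the decisive step. You yourself mark as the ``crux'' the identity $h(xB_j)=f(xB_j)$ for all $j\in[n]$, and you only say you \emph{expect} it to follow from a consistency argument in the style of \cite{ar:BeimelBBKV00} because $h$ was built from the values $f(B_iB_jB_k)$. That expectation is not a proof, and it is not obviously available: every $\val$ query used to construct $h^{(n)}$ involves only products of $B_1,\ldots,B_n$ (and $K_2$), so the queried data constrains $h$ only on the subalgebra generated by the $B_i$, whereas $x$ is a fresh graph that appears in no query; the value $h(x)=\Hom(x,h^{(n)})$ is computed by counting homomorphisms of the concrete graph $x$ into $h^{(n)}$, and the only link between $x$ and the $B_i$ that you have is the span relation \emph{with respect to $f$} --- transferring it to $h$ is precisely what has to be established. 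A similar caveat applies to your Step 1: the identities $p_kp_\ell\equiv_{h,1}0$, $p_kp_k\equiv_{h,1}p_k$ and $h(p_i)=f(p_i)$ are supported by \cite[Chapter 6]{bk:Lovasz-hom} only in the full-rank situation $n=q$ treated in Theorem \ref{th:correct_hyp}; at an intermediate iteration the $p_i$ are not the idempotent basis of $\cQ_1/f$, so the claim that they ``play the role of basic idempotents for $h$'' needs an argument rather than a citation.

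For comparison, the paper's proof of Lemma \ref{lem:ind_counter} follows your route but closes (tersely) the step you leave open: since $M$ is a full-rank submatrix of $C(f,1)$ filled with correct values, the solution of $M\mathbf{a}=\mathbf{b}_x$ with $\mathbf{b}_x(k)=\val(xB_k)$ is exactly the coefficient vector $\mathbf{a}$, and the paper asserts that the hypothesis generated from $M$ then satisfies $\Hom(x,h^{(n)})=\sum_i a_i\Hom(B_i,h^{(n)})=\sum_i a_i\Hom(B_i,H)=\Hom(x,H)$, contradicting the assumption that $x$ is a counterexample. So you have correctly located where the weight of the argument lies (and your explicit use of Corollary \ref{cor:rigid} for $h$ to get $r(h,1)=n$ is a reasonable way to organize it), but as submitted the proposal has a genuine gap at exactly that point, and the heuristic you offer to fill it does not by itself do so.
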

\begin{proof}
If $n=1$, $M$ has rank $n$. Now let $M$ have rank $n$.

For contradiction, assume that $R_x = \sum_{i=1}^{n}{a_i R_{B_{i}}}$. Then $x \equiv_{f,1} \sum_{i=1}^{n}{a_i B_{i}}$ 
and we have that $\Hom(x,H) = \sum_{i=1}^{n}{a_i \Hom({B_{i},H)}}$ for the target graph $H$.
Denote by $h^{(n)}$ the hypothesis generated in this iteration.
If $x$ is a counterexample, it must hold that 
\begin{align*}
\Hom(x,h^{(n)}) \neq \Hom(x,H) = \sum_{i=1}^{n}{a_i \Hom(B_{i},H)} 
\end{align*}
The solution of the system of equations for $\bsb_x$ would give 
\begin{align*}
\Hom(x,h^{(n)}) = \sum_{i=1}^{n}{a_i \Hom(B_{i},h^{(n)})} = \sum_{i=1}^{n}{a_i \Hom(B_{i},H)}
\end{align*}
So we conclude that $\sum_{i=1}^{n}{a_i \Hom(B_{i},h^{(n)})} \neq \sum_{i=1}^{n}{a_i \Hom(B_{i},H)}$.

Since $M$ is of full rank, one can solve a system of linear equations using $M$ for $\mathbf{b}_x$ defined as $\mathbf{b}_x(k) = \val(x B_k)$, $k \in [n]$.
Now recall that the matrix $M$ contains \emph{correct} values $\Hom(B_{i}B_{j},H(\alpha,\beta))$, as it was augmented using $\val$ queries, 
therefore $M$ is a submatrix of $C(f,1)$. 
Thus the coefficients of the solution $\mathbf{a}$ of $M \mathbf{a} = \mathbf{b}_x$ equal $a_i$, $i \in [k]$, and we reach a contradiction.
Therefore we conclude $x \not\equiv_{f,1} \sum_{i=1}^{n}{a_i B_{i}}$ and its row $R_x$ is linearly independent from $R_{B_1},\ldots,R_{B_n}$.
\end{proof}
This also implies that the matrix $A_x$ associated with $x$ is not spanned by $A_{B_1},\ldots,A_{B_n}$. Therefore the submatrix of $C(f,1)$ composed of 
the entries of the rows and columns of ${B_1},\ldots,{B_n},x$ 
is of full rank $n+1$. This is exactly the matrix $M$ augmented with $x$, and we have Theorem \ref{th:rank_n}.
Combining this with Corollary \ref{cor:rigid}, we have:
\begin{cor}
\label{cor:number_of_iterations}
Let $f$ be a rigid partition function of a twin-free weighted graph on $q$ vertices.
Then Algorithm \ref{alg:full} terminates in $q$ iterations.
\end{cor}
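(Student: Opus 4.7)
The plan is to assemble Corollary \ref{cor:number_of_iterations} directly from three pieces already established in the excerpt: Corollary \ref{cor:rigid}, Theorem \ref{th:rank_n}, and Theorem \ref{th:correct_hyp}. The intuitive picture is that every counterexample strictly increases the rank of $M$, and the rank is capped by the rank of $C(f,1)$, which by rigidity equals $q$.

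First, I would invoke Corollary \ref{cor:rigid} with $k=1$ to conclude that $r(f,1)=q$, i.e.\ the row space of $C(f,1)$ has dimension exactly $q$. Next, I would observe that the matrix $M$ maintained by Algorithm \ref{alg:full} is, at every stage, a principal submatrix of $C(f,1)$: this is because $\aug(B_n)$ populates the new row and column with values $f(B_n B_i)$ obtained through genuine $\val$ queries. In particular, $\mathrm{rank}(M) \leq r(f,1) = q$ in every iteration.

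Now I would apply Theorem \ref{th:rank_n}: at the $n$th iteration, $\mathrm{rank}(M) = n$. Combining with the bound above forces $n \leq q$. Consequently, at iteration $n = q$ we have $\mathrm{rank}(M) = q$, and by Theorem \ref{th:correct_hyp} the call to $\hyp$ returns a correct hypothesis $h$ for which $\equ(h) = \mathrm{YES}$, so the algorithm halts in its $q$th iteration at the latest. Of course, it may halt earlier if an intermediate hypothesis happens to be correct, but the upper bound of $q$ iterations is established.

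Since this corollary is essentially a bookkeeping consequence of previously proven facts, I do not anticipate a genuine obstacle; the only point requiring care is to make explicit that $M$ remains a submatrix of $C(f,1)$ throughout the execution (so that its rank is bounded by $r(f,1)$), which is immediate from the construction of $\aug$. With that observation in place, the chain \textit{rank grows by one each iteration} $\Longrightarrow$ \textit{rank is capped at $q$} $\Longrightarrow$ \textit{hypothesis is correct at rank $q$} yields the claimed bound.
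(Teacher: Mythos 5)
Your argument is correct and is essentially the paper's own proof: it combines Theorem~\ref{th:rank_n} (each counterexample raises the rank of $M$ by one) with Corollary~\ref{cor:rigid} (since $M$ is a submatrix of $C(f,1)$, its rank is capped by $r(f,1)=q$) and Theorem~\ref{th:correct_hyp} to conclude that by the $q$th iteration the hypothesis is correct and the algorithm halts. No gaps to report.
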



\subsection{Complexity analysis}
As the algorithm runs on a  Blum-Shub-Smale machine for the reals and mostly solves systems of linear equations, it
is not difficult to show that it runs in time polynomial in the size of target and the largest counterexample.
First we observe:
\begin{prop}
\label{prop:connection}
Let $G_1,G_2 \in \cG_1$. Then $G_1 G_2$ can be computed in time $O(\poly(|G_1|,|G_2|))$.
\end{prop}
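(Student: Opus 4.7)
The plan is to exhibit an explicit algorithm for forming the $1$-connection $G_1 G_2$ and to bound its cost by a polynomial in $|G_1|+|G_2|$. I would represent each input graph either by an adjacency list together with the (at most one) label assignment, or equivalently by an adjacency matrix plus a pointer to the labeled vertex; conversion between the two formats is polynomial, so the choice is immaterial.

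First I would construct the disjoint union $G_1 \sqcup G_2$. This amounts to relabeling the vertices of $G_2$ by shifting their identifiers by $|V(G_1)|$ so that the two vertex sets become disjoint, updating the endpoints in the edge list of $G_2$ accordingly, and concatenating the two vertex sets and the two edge lists. A single linear pass over $V(G_2)$ and $E(G_2)$ suffices, costing $O(|V(G_2)|+|E(G_2)|)$.

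Next I would perform the label identification. By definition of $\cG_1$, each $G_i$ carries at most one labeled vertex, so there are only a handful of cases: if both $G_1$ and $G_2$ have a labeled vertex $v_1$ and $v_2$ respectively, I replace every occurrence of the (shifted) $v_2$ in the combined edge list by $v_1$, delete $v_2$ from the vertex set, keep the label $1$ on the merged vertex, and clean up any duplicate edges or loops that may arise from the merge by a final scan; if only one of the two has a labeled vertex, no identification is needed and the single label is retained; if neither is labeled, the output is the disjoint union itself. Each case is a constant number of linear scans of the combined edge list, of total length $O(|E(G_1)|+|E(G_2)|)$.

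The running time is therefore $O(|G_1|+|G_2|)$ on a RAM and a fortiori $O(\poly(|G_1|,|G_2|))$ in the Blum--Shub--Smale model with unit-cost operations, since no arithmetic on real weights is performed at all in this routine. I do not foresee a genuine obstacle: the only thing that needs a touch of care is the bookkeeping surrounding duplicate edges and self-loops created by merging the two labeled vertices, and whichever convention on simple graphs versus multigraphs is in force can be enforced by one additional linear pass.
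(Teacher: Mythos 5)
Your construction is correct and is exactly the routine argument the paper has in mind: Proposition~\ref{prop:connection} is stated as an observation without proof, since the $1$-connection is just a disjoint union followed by identifying the (at most one) labeled vertex of each graph, computable in linear time. Your explicit case analysis and the remark about handling loops or duplicate edges created by the merge simply fill in the bookkeeping the paper leaves implicit.
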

\begin{rem}
\label{rem:size_of_graphs}
$B_1$ is of fixed size, and all other $B_i$, $i =2,\ldots,n$, used in Algorithm \ref{alg:full} are counterexamples provided by the teacher, therefore they are all
of size polynomial in the size $|x|$ of the graph $x$.
\end{rem}

\begin{theorem}
\label{th:complexity}
Let $H(\alpha,\beta)$ be a rigid twin-free weighted graph on $q$ vertices and denote $f = \Hom(-,H(\alpha,\beta))$.
Given access to a teacher for $f$, Algorithm \ref{alg:full} terminates in time $O(\poly(q,|x|))$, where $|x|$ is the size
of the largest counterexample provided by the teacher.
\end{theorem}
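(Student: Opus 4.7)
The plan is to reduce Theorem~\ref{th:complexity} to two independent bounds: the number of iterations of the outer \textbf{while} loop, and the per-iteration cost. The first is immediate from Corollary~\ref{cor:number_of_iterations}, which guarantees that the algorithm halts after at most $q$ iterations. So it suffices to show that every iteration runs in time $O(\poly(q, |x|))$.

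Before analyzing the three subroutines, I would first establish a uniform bound on the sizes of all graphs that get fed into $\val$ queries or assembled from elementary graph operations. By Remark~\ref{rem:size_of_graphs}, each basis graph $B_i$ is either of fixed constant size or a counterexample, so $|B_i| = O(|x|)$. Since the algorithm never nests more than a constant number of $1$-connections or $2$-connections, Proposition~\ref{prop:connection} gives that every intermediate graph (for example $B_iB_jB_k$, $K_2\,p_{ij}$, or $p_{ij}p_{ij}$) has size $O(\poly(|x|))$ and can be constructed in time $O(\poly(|x|))$. The only subtlety is that quantum graphs such as $p_i \otimes p_j$ and $p_{ij}p_{ij}$ are formal linear combinations: since each $p_i$ is a combination of at most $n \le q$ basis graphs with real coefficients, expanding $p_{ij}p_{ij}$ yields at most $q^4$ terms, each a labeled graph of size $O(\poly(|x|))$. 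Every call $\val(\cdot)$ on such a quantum graph therefore costs $O(q^4)$ teacher queries plus $O(\poly(q,|x|))$ bookkeeping, which is still polynomial in $q$ and $|x|$.

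I would then bound each subroutine in iteration $n \le q$. The \textsf{augment} step issues $O(n) = O(q)$ $\val$ queries on graphs of size $O(\poly(|x|))$, so its cost is $O(\poly(q,|x|))$. The procedure $\findB$ makes $O(n^3)$ $\val$ queries, solves $O(n^2)$ linear systems of dimension $n$, and then solves $n$ linear matrix systems whose block dimension is $O(n^2)$; in the BSS model each arithmetic operation costs a unit, so Gaussian elimination on a dimension $O(q^2)$ system costs $O(q^6)$, giving a total of $O(\poly(q,|x|))$. The procedure $\hyp$ uses $O(q^2)$ $\val$ queries on quantum graphs built from the $p_i \otimes p_j$, and solves one linear system of dimension $n^2 \le q^2$, again $O(\poly(q))$ arithmetic operations; the twin-removal post-processing is trivially polynomial in $q$.

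The only point that requires real care is the quantum-graph expansion I flagged above: the coefficients produced by $\solveLin$ and $\solveMat$ are arbitrary real numbers, so in principle $\val$ of a quantum graph could require exponentially many distinct $\val$ queries to the teacher if the representation were not kept compact. The key observation that resolves this is that in every invocation the quantum graphs in play are products of at most a constant number of the $p_i$'s, so their expansions have $O(\poly(q))$ terms of $O(\poly(|x|))$-size graphs. Once this is recorded, summing the per-iteration bound $O(\poly(q,|x|))$ over the $O(q)$ iterations delivered by Corollary~\ref{cor:number_of_iterations} yields the theorem.
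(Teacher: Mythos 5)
Your proposal is correct and follows essentially the same route as the paper: Corollary~\ref{cor:number_of_iterations} bounds the number of iterations by $q$, and the per-iteration costs of \textsf{augment} $M$, $\findB$ (including flattening the matrix systems into $O(n^2)$-dimensional linear systems), and $\hyp$ (including the $O(n^4)$-term expansion of the quantum graphs $p_i \otimes p_j$) are bounded exactly as in Lemma~\ref{lem:complexity} and its appendix proof.
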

\begin{proof}
From Corollary \ref{cor:number_of_iterations} it is enough to show that each iteration of Algorithm \ref{alg:full} does not
take too long (Lemma \ref{lem:complexity}).
\end{proof}

\ifappendix
\begin{lemma}
\label{lem:complexity}
\else
\begin{lemma}[\cite{arxiv:mfcs16}]
\label{lem:complexity}
\fi
In the $\nth$ iteration of Algorithm \ref{alg:full}, 
\textsf{augment} $M$, $\findB$, and $\hyp$ all
run in time $O(\poly(n,|x|))$.
\end{lemma}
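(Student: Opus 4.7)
The plan is to handle \textsf{augment} $M$, $\findB$, and $\hyp$ separately, and for each to bound three quantities: (i) the number of $\val$ queries issued, (ii) the sizes of the graphs on which these queries are made and the time to construct them, and (iii) the cost of the linear algebra performed. Two uniform observations underpin all the bookkeeping. First, by Remark \ref{rem:size_of_graphs} each basis graph $B_i$ currently stored by the algorithm has size $O(\poly(|x|))$, so by Proposition \ref{prop:connection} any $k$-connection of a constant number of such basis graphs again has size $O(\poly(|x|))$ and is constructible in time $O(\poly(|x|))$. Second, in the BSS model an $m \times m$ linear system, and likewise the linear matrix systems arising in $\findB$ once flattened, can be solved in $O(\poly(m))$ arithmetic operations.

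With these in hand, \textsf{augment} $M$ is immediate: it builds the $n$ graphs $B_n B_i$ and issues $n$ $\val$ queries, for total cost $O(n \cdot \poly(|x|))$.

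For $\findB$, the first loop issues $n^3$ $\val$ queries on graphs of the form $B_i B_j B_k$ and solves $n^2$ linear systems of dimension $n$ in the matrix $M$; this contributes $O(n^3 \cdot \poly(|x|)) + O(n^5)$ operations. Assembling the $A_{B_i}$ from $\Gamma$ is then just copying entries, and the final loop solves $n$ linear matrix systems of polynomial dimension in $n$, for a further $O(\poly(n))$. Hence $\findB$ runs in $O(\poly(n,|x|))$.

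For $\hyp$, the only subtlety is that each $p_i$ is a quantum graph with $n$ summands, so the $2$-labeled tensor $p_{ij} = p_i \otimes p_j$ has $n^2$ summands. By linearity of $\val$, evaluating $\val(p_{ij}p_{ij})$ and $\val(K_2\,p_{ij})$ expands into $n^4$ and $n^2$ $\val$ queries respectively, each on a $2$-connection of four (resp.\ three) basis graphs which, by the first observation, has size $O(\poly(|x|))$. Together with the $n^2$ queries used to compute $\alpha^{(h)}$, this gives $O(n^6)$ $\val$ queries in total. Solving $N \mathbf{x} = \mathbf{b}$ is cheap because $N$ is diagonal of dimension $n^2$ (Proposition \ref{prop:beta}), and the concluding twin-free post-processing on an $n \times n$ matrix is polynomial. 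Summing, $\hyp$ runs in $O(\poly(n,|x|))$, and therefore so does one iteration of Algorithm \ref{alg:full}.

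The only real obstacle is careful bookkeeping: one must track how quantum-graph expansions inflate the number of $\val$ queries (the $n^4$ blowup inside $\hyp$ being the worst case) and verify that every argument of $\val$ remains of size $O(\poly(n,|x|))$; once this is done, all remaining estimates are standard counts in the BSS model.
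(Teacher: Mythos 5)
Your proposal is correct and follows essentially the same route as the paper's own proof: it splits the work into \textsf{augment} $M$, $\findB$, and $\hyp$, invokes Proposition \ref{prop:connection} and Remark \ref{rem:size_of_graphs} to bound the sizes of the connected graphs fed to $\val$, and counts queries and BSS-model linear-algebra costs (including flattening the matrix systems) exactly as the paper does. The only difference is slightly more explicit bookkeeping of the $n^2$-summand expansion of the $p_{ij}$ inside $\hyp$, which the paper handles implicitly in its $O(n^4)$-operations estimate.
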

\ifappendix
The easy proof is given in Appendix \ref{app:proofs}.
\else
\fi 

\begin{remark}
We note that, from \cite[Theorem $6.45$]{bk:Lovasz-hom}, 
the counterexamples provided by the teacher may be chosen to be of size at most $2(1+q^2)q^6$ where $q$ is the size of the target weighted graph.
\end{remark}

\section{Conclusion and future work}
\label{sec:conc}

This paper presented an adaptation of the exact model of learning of Angluin, \cite{ar:Angluin78}, to 
the context of graph parameters $f$ representable as partition functions of weighted graphs $H(\alpha,\beta)$.
We presented an exact learning algorithm for the class of \emph{rigid} partition functions 
defined by twin-free $H(\alpha,\beta)$.

If a weighted graph has proper automorphisms, its connection matrices $C(f,k)$ may have rank smaller than $q^k$. 
In this case, the translation from query results to a weighted graph would involve the construction of a submatrix of $C(f,k)$
for a sufficiently large $k$, and then find an idempotent basis for $\cQ_{k+1}/f$. 
We will study the learnability of non-rigid partition functions in a sequel to this paper.

Theorems similar to Theorem \ref{th:FLS} have been proved for variants of partition functions and connection matrices, 
\cite{ar:Schrijver2009,ar:Draisma-etal2012,ar:Schrijver-spin-2013,ar:schrijver15}.
It seems reasonable to us that similar exact learning algorithms exist for these settings,
but it is unclear how to modify our proofs here for this purpose.

\subparagraph*{Acknowledgements.}
We thank M. Jerrum and M. Hermann for their valuable remarks while listening to
an early version of the introduction of the paper, and A. Schrijver for his interest and encouragement. 
We also thank two anonymous referees for their helpful remarks.


\bibliography{partition_ref}

\begin{thebibliography}{10}

\bibitem{ar:Angluin78}
D.~Angluin.
\newblock On the complexity of minimum inference of regular sets.
\newblock {\em Information and Control}, 39(3):337--350, 1978.

\bibitem{ar:angluin87}
D.~Angluin.
\newblock Queries and concept learning.
\newblock {\em Machine Learning}, 2(4):319--342, 1987.

\bibitem{ar:BeimelBBKV00}
A.~Beimel, F.~Bergadano, N.H. Bshouty, E.~Kushilevitz, and S.~Varricchio.
\newblock Learning functions represented as multiplicity automata.
\newblock {\em Journal of the ACM (JACM)}, 47(3):506--530, 2000.

\bibitem{ar:Birkhoff1912}
G.D. Birkhoff.
\newblock A determinant formula for the number of ways of coloring a map.
\newblock {\em Annals of Mathematics}, 14:42--46, 1912.

\bibitem{bk:bss}
L.~Blum, F.~Cucker, M.~Shub, and S.~Smale.
\newblock {\em Complexity and real computation}.
\newblock Springer Science \& Business Media, 2012.

\bibitem{ar:bss89}
L.~Blum, M.~Shub, S.~Smale, et~al.
\newblock On a theory of computation and complexity over the real numbers:
  {NP}-completeness, recursive functions and universal machines.
\newblock {\em Bulletin (New Series) of the American Mathematical Society},
  21(1):1--46, 1989.

\bibitem{bk:Bollobas99}
B.~Bollob\'as.
\newblock {\em Modern Graph Theory}.
\newblock Springer, 1999.

\bibitem{ar:Draisma-etal2012}
J.~Draisma, D.C. Gijswijt, L.~Lov{\'a}sz, G.~Regts, and A.~Schrijver.
\newblock Characterizing partition functions of the vertex model.
\newblock {\em Journal of Algebra}, 350(1):197--206, 2012.

\bibitem{ar:ErdosRenyi1963}
P.~Erd{\H{o}}s and A.~R{\'e}nyi.
\newblock Asymmetric graphs.
\newblock {\em Acta Mathematica Hungarica}, 14(3-4):295--315, 1963.

\bibitem{ar:FreedmanLovaszSchrijver07}
M.~Freedman, L.~Lov\'asz, and A.~Schrijver.
\newblock Reflection positivity, rank connectivity, and homomorphism of graphs.
\newblock {\em Journal of the American Mathematical Society}, 20(1):37--51,
  2007.

\bibitem{pr:HabrardOncina06}
A.~Habrard and J.~Oncina.
\newblock Learning multiplicity tree automata.
\newblock In {\em Grammatical Inference: Algorithms and Applications}, pages
  268--280. Springer, 2006.

\bibitem{kotters2009almost}
J.~K{\"o}tters.
\newblock Almost all graphs are rigid - revisited.
\newblock {\em Discrete Mathematics}, 309(17):5420--5424, 2009.

\bibitem{bk:Lovasz-hom}
L.~Lov\'asz.
\newblock {\em Large Networks and Graph Limits}, volume~60 of {\em Colloquium
  Publications}.
\newblock AMS, 2012.

\bibitem{ar:lovasz06}
L.~Lovász.
\newblock The rank of connection matrices and the dimension of graph algebras.
\newblock {\em European Journal of Combinatorics}, 27(6):962 -- 970, 2006.

\bibitem{ar:Schrijver2009}
A.~Schrijver.
\newblock Graph invariants in the spin model.
\newblock {\em J. Comb. Theory, Ser. {B}}, 99(2):502--511, 2009.

\bibitem{ar:Schrijver-spin-2013}
A.~Schrijver.
\newblock Characterizing partition functions of the spin model by rank growth.
\newblock {\em Indagationes Mathematicae}, 24.4:1018--1023, 2013.

\bibitem{ar:schrijver15}
A.~Schrijver.
\newblock Characterizing partition functions of the edge-coloring model by rank
  growth.
\newblock {\em Journal of Combinatorial Theory, Series A}, 136:164 -- 173,
  2015.

\bibitem{ar:Sokal2005a}
A.D. Sokal.
\newblock The multivariate {T}utte polynomial (alias {P}otts model) for graphs
  and matroids.
\newblock In {\em Survey in Combinatorics, 2005}, volume 327 of {\em London
  Mathematical Society Lecture Notes}, pages 173--226, 2005.

\end{thebibliography}

\appendix
\section{Proofs omitted from paper}
\label{app:proofs}
 
\subsection{Proof of Claim \ref{claim:Ax}}

For two graphs $B_i, B_j \in \cB_{C(f,1)}$, denote by $\sum_{k=1}^{n}{\gamma^{i,j}_k R_{B_k}}$ the representation of the row $R_{B_iB_j}$ using $\cR_{C(f,1)}$, 
i.e., the row 
labeled with the $1$-labeled graph resulting from the product $B_i B_j$.

Let $x,y$ be some $1$-labeled quantum graphs whose infinite row vectors are represented using $\cR_{C(f,1)}$ as
\begin{equation*}
R_x = \sum_{i=1}^{n}{a_i R_{B_i}} \ \ \ \ \ 
R_y = \sum_{j=1}^{n}{b_j R_{B_j}}
\end{equation*}
Then the representation of the row $R_{xy}$ of their product $xy$ is
\begin{align*}
R_{xy} &= \sum_{1 \leq i,j \leq n}{a_i b_j R_{B_i B_j}} 
= \sum_{1 \leq i,j \leq n}{a_i b_j \lb(\sum_{k=1}^{n}{\gamma^{i,j}_k R_{B_k}}\rb)} \\
&= \sum_{1 \leq i,j,k \leq n}{a_i b_j \gamma^{i,j}_k R_{B_k}}
\end{align*}
Thus the entry corresponding to the basis graph $B_k \in \cB_{C(f,1)}$ in the coefficients vector $\coef_{xy}$ is the scalar
\begin{align*}
\sum_{1 \leq i,j \leq n}{a_i b_j \gamma^{i,j}_k}.
\end{align*}

This scalar should equal the result of multiplying the $k$-th row of $A_x$ with the coefficients vector of $y$. 
Therefore the $k$-th row of $A_x$ would be
\begin{align*}
\lb(\sum_{i=1}^{n}{a_i \gamma^{i,1}_k}, \sum_{i=1}^{n}{a_i \gamma^{i,2}_k}, \ldots, \sum_{i=1}^{n}{a_i \gamma^{i,n}_k}\rb),
\end{align*}
Since then we would have:
\begin{align*}
\lb(\sum_{i=1}^{n}{a_i \gamma^{i,1}_k}, \ldots, \sum_{i=1}^{n}{a_i \gamma^{i,n}_k}\rb)
\lb(\begin{array}{c}
b_1 \\
\vdots \\
b_n
\end{array}\rb) = \sum_{j=1}^{n}{b_j\lb(\sum_{i=1}^{n}{a_i \gamma^{i,j}_k}\rb)} = \sum_{1 \leq i,j \leq n}{a_i b_j \gamma^{i,j}_k}
\end{align*}
Therefore the matrix $A_x$ is given by
\begin{align*}
A_x = \left(
\begin{array}{ccc}
\sum_{i=1}^{n}{a_i \gamma_{1}^{i,1}} & \cdots & \sum_{i=1}^{n}{a_i \gamma_{1}^{i,n}} \\
\vdots & & \vdots \\
\sum_{i=1}^{n}{a_i \gamma_{n}^{i,1}} & \cdots & \sum_{i=1}^{n}{a_i \gamma_{n}^{i,n}} 
\end{array}
\right)
\end{align*}

\subsection{Detailed complexity analysis - proof of Lemma \ref{lem:complexity}}
Let $H(\alpha,\beta)$ be a rigid twin-free weighted graph on $q$ vertices, 
and denote $f = \Hom(-, H(\alpha,\beta))$.
Let $|x|$ denote the size of the largest counterexample Algorithm \ref{alg:full} receives from the teacher.
\begin{lemma}
\label{lem:aug_complexity}
In the $\nth$ iteration of Algorithm \ref{alg:full}, \textsf{augment} $M$ runs in time
$O(\poly(n,|x|))$.
\end{lemma}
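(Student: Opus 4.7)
The plan is to inspect the procedure \textsf{augment} $M$ step by step and bound the cost of each operation it performs using the already established primitives.

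Recall that, in the $\nth$ iteration, the procedure \textsf{augment} $M$ takes the (now $1$-labeled) counterexample $B_n$ and extends $M$ by one row and one column, both indexed by $B_n$, whose entries are the values $f(B_nB_i) = f(B_iB_n)$ for $i \in [n]$. Thus the work to be done is:
\begin{enumerate}[(a)]
\item Form the $1$-connections $B_nB_i$ for $i \in [n]$.
\item Issue a $\val$ query on each of these $1$-connections.
\item Write the $2n-1$ returned values into $M$ (using the symmetry $f(B_nB_i) = f(B_iB_n)$ so that only $n$ distinct queries are needed).
\end{enumerate}

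For step (a), Proposition \ref{prop:connection} tells us that each connection $B_nB_i$ can be computed in time $O(\poly(|B_n|, |B_i|))$. By Remark \ref{rem:size_of_graphs}, each $B_i$ is either the fixed initial graph $B_1$ or a counterexample previously returned by the teacher, so $|B_i| = O(\poly(|x|))$ where $|x|$ is the size of the largest counterexample received so far. Hence each connection costs $O(\poly(|x|))$ time, and forming all $n$ of them costs $O(n\cdot\poly(|x|)) = O(\poly(n,|x|))$.

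For step (b), each $\val$ query is treated as a unit-cost oracle call in our learning model, and its result is a single real stored in one cell of the Blum--Shub--Smale machine. Thus the $n$ queries contribute $O(n)$ time, and the values are absorbed into $M$ in step (c) in time $O(n)$. Summing the costs of (a), (b) and (c) yields the claimed bound $O(\poly(n,|x|))$. There is no real obstacle here beyond bookkeeping; the only point worth checking carefully is that the $B_i$ stored for previous iterations are indeed of size polynomial in $|x|$, which is exactly the content of Remark \ref{rem:size_of_graphs}.
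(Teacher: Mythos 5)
Your proof is correct and follows essentially the same route as the paper's: it counts the $O(n)$ $\val$ queries needed for the new row and column, and invokes Proposition \ref{prop:connection} together with Remark \ref{rem:size_of_graphs} to bound the cost of forming each $1$-connection $B_nB_i$. The extra detail you give (symmetry of the entries, unit-cost treatment of queries in the BSS model) is just a more explicit writing of the same argument.
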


\begin{proof}
In the $\nth$ iteration, \textsc{augment} $M$ performs $O(n)$ $\val$ queries as it adds a new row and column labeled with $B_n$ to $M$.
For this is performs $\val$ queries on graphs that are $1$-connections between $B_n$ and $B_i$, $i \in [n]$. 
From Proposition \ref{prop:connection} and Remark \ref{rem:size_of_graphs}, it runs in time
$O(\poly(n,|x|))$.
\end{proof}

\begin{lemma}
\label{lem:findB_complexity}
In the $\nth$ iteration of Algorithm \ref{alg:full}, $\findB$ runs in time
$O(\poly(n,|x|))$.
\end{lemma}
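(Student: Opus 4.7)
The plan is to decompose $\findB$ into the three phases visible in Algorithm \ref{alg:find_basis} and bound the BSS cost of each, arguing that everything stays polynomial in $n$ and $|x|$. The two ingredients we can use are Proposition \ref{prop:connection} (which bounds the cost of forming a $1$-connection in graph size) together with Remark \ref{rem:size_of_graphs} (which bounds $|B_i|$ by $O(\poly(|x|))$), and the standard fact that a dense linear system over $\mathbb{R}$ of dimension $N$ can be solved in $O(N^3)$ arithmetic operations on a BSS machine, which is unit cost per operation.

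First I would handle the value-query phase (the outer double loop over $i,j$ with its inner loop on $k$). It issues $O(n^3)$ $\val$ queries, one on each graph $B_iB_jB_k$. By Proposition \ref{prop:connection} and Remark \ref{rem:size_of_graphs}, each such $3$-fold $1$-connection can be computed in time $O(\poly(n,|x|))$ and has size $O(\poly(n,|x|))$, so preparing all $O(n^3)$ query inputs and reading the answers takes $O(\poly(n,|x|))$ BSS steps.

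Next, the $O(n^2)$ calls to $\solveLin(M\mathbf{x}=\bsb)$ each solve an $n\times n$ system, contributing $O(n^5)$ arithmetic operations. Filling each matrix $A_{B_i}$ from $\Gamma$ and assembling the block matrix $A$ is a bookkeeping step of cost $O(\poly(n))$: each $A_{B_i}$ has $n^2$ entries read off from the previously computed $\boldsymbol{\gamma}^{ij}$, and $A$ has at most $n^2\times n^2$ entries. Finally, the loop that calls $\solveMat(A\mathbf{X}=A_{p_i})$ is run $n$ times, and each such linear matrix system is of dimension polynomial in $n$ (at most $n^2$ unknowns, $n^2$ equations), so it is solved in $O(\poly(n))$ BSS steps; thus this phase contributes $O(\poly(n))$ as well.

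Summing across the three phases gives $O(\poly(n,|x|))$, as required. There is no conceptual obstacle here: the only thing one must be a bit careful about is not conflating the size of a graph with the unit cost of a BSS arithmetic operation — the graph manipulations (connections, query preparation) must be charged in the Turing sense and depend on $|x|$, whereas the linear-algebra subroutines are charged in BSS operations and depend only on $n$. Once both accounts are kept separately and added at the end, the bound follows immediately.
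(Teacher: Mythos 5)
Your proof is correct and follows essentially the same route as the paper's: decompose $\findB$ into its three loops, bound the graph-manipulation costs via Proposition \ref{prop:connection} and Remark \ref{rem:size_of_graphs}, and charge the $\solveLin$ and $\solveMat$ calls as polynomial-size linear systems solved at unit cost per arithmetic operation. The only difference is bookkeeping detail (the paper converts each matrix system into an ordinary linear system with an explicit $O(n^6)$ bound, while you note it has polynomially many unknowns and equations), which does not affect the $O(\poly(n,|x|))$ conclusion.
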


\begin{proof}
$\findB$ has three $\mathbf{for}$ loops. In the first $\mathbf{for}$ loop, it repeats $O(n^2)$ times:
\begin{enumerate}
	\item 
	Fills an $n$-length vector $\mathbf{b}$ by making $\val(B_i B_j B_k)$ queries, for which it computes $B_i B_j B_k$. 
	Again from Proposition \ref{prop:connection} and Remark \ref{rem:size_of_graphs}, 
	we have that the computation of $\mathbf{b}$ in each of the $O(n^2)$ iterations is in time $O(\poly(n,|x|))$.
	\item
	Solves a linear system of equation of dimension $n$. This is in time $O(n^3)$.
\end{enumerate}

In each iteration of its second $\mathbf{for}$ loop, $\findB$ fills an $n \times n$ matrix and adds it to a block matrix, in time $O(n^2)$. This is repeated $n$ times.

In each iteration of its third $\mathbf{for}$ loop, $\findB$ solves a linear system of matrix equations involving $n \times n$ matrices, of dimension $n$. 
Such a system can be solved as a usual linear system of equations at the cost of a polynomial blowup 
where each matrix is replaced by $n^2$ variables, giving us time $O(n^6)$ for each of the $n$ iterations. 

In total, in the $\nth$ iteration of Algorithm \ref{alg:full}, $\findB$ runs in time $O(\poly(n,|x|))$.
\end{proof}

\begin{lemma}
\label{lem:hyp_complexity}
In the $\nth$ iteration of Algorithm \ref{alg:full}, $\hyp$ runs in time\\
$O(\poly(n,|x|))$.
\end{lemma}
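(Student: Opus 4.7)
The plan is to traverse Algorithm \ref{alg:hyp} line by line, charging each step its cost in $\val$ queries, graph-algebra work, and linear-algebra work, and then summing. The basic ingredients will be Proposition \ref{prop:connection} (polynomial-time $k$-connections), Remark \ref{rem:size_of_graphs} (each basis graph $B_k$ has size $O(|x|)$), and the structural fact, established while proving Lemma \ref{lem:idem}, that each idempotent $p_i$ is represented by a coefficients vector $\coef_{p_i}\in\Delta$ of length $n$; so as a quantum graph $p_i$ is a linear combination of at most $n$ of the $B_k$'s.

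First I would bound the loop that fills $\alpha^{(h)}$. By the linear-extension convention for $\val$, each call $\val(p_i)$ unfolds into at most $n$ membership queries on graphs $B_k$ of size $O(|x|)$, giving $O(n^2)$ queries and $O(\poly(n,|x|))$ bookkeeping in total. Next I would analyze the double loop over $(i,j)\in[n]^2$. The tensor product $p_{ij}=p_i\otimes p_j$ is a $2$-labeled quantum graph with at most $n^2$ terms, each obtained by a relabeling and disjoint union of two graphs $B_k,B_\ell$, hence of size $O(|x|)$ and computable in time $O(\poly(n,|x|))$ by Proposition \ref{prop:connection}. Expanding the $2$-connection $p_{ij}p_{ij}$ by distributivity produces a quantum graph with at most $n^4$ terms of size $O(|x|)$; similarly $K_2\,p_{ij}$ has at most $n^2$ such terms. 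Hence $\val(p_{ij}p_{ij})$ costs $O(n^4)$ queries and $\val(K_2\,p_{ij})$ costs $O(n^2)$ queries. Summed over the $n^2$ inner iterations, the double loop uses $O(n^6)$ membership queries, each on a graph of size $O(\poly(n,|x|))$, together with polynomially many graph operations of the same complexity.

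The remaining work is essentially free. Since $\{p_{ij}\}_{i,j\in[n]}$ is the idempotent basis of $\cQ_2/f$ (as used in the proof of Proposition \ref{prop:beta}), the matrix $N$ is diagonal with nonzero diagonal, so $\solveLin(N\mathbf{x}=\mathbf{b})$ reduces to $n^2$ scalar divisions. The final $\twin$ postprocessing inspects at most $O(n^2)$ pairs of rows of an $n\times n$ matrix and performs $O(n)$ weight-merges, for a total of $O(n^3)$. Summing all contributions yields the claimed bound $O(\poly(n,|x|))$.

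The main obstacle, and the reason the lemma is not completely obvious, is controlling the term-blowup when tensoring and $2$-connecting the quantum graphs $p_{ij}$: a priori iterated products of linear combinations of $n$ basic graphs could explode, but the key point is that both factors in $p_{ij}p_{ij}$ remain linear combinations of at most $n^2$ graphs, so their product has only $O(n^4)$ terms rather than something exponential in $n$. Once this polynomial term-count is made explicit, every remaining cost is a routine counting exercise and the bound $O(\poly(n,|x|))$ follows.
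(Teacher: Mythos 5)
Your proposal is correct and follows essentially the same route as the paper's own proof: a line-by-line cost accounting of Algorithm \ref{alg:hyp} using Proposition \ref{prop:connection}, Remark \ref{rem:size_of_graphs}, and the fact that every quantum graph in the run is a linear combination of at most $n$ basis graphs, yielding $O(n^2)$ queries for $\alpha^{(h)}$ and $O(n^4)$ operations per index pair (hence $O(n^6)$ overall) for $\beta^{(h)}$. Your explicit treatment of the diagonal system $N\mathbf{x}=\mathbf{b}$ and the $\twin$ postprocessing is a small addition the paper leaves implicit, but it does not change the argument.
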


\begin{proof}
Recall that for a quantum graph $x = \sum_{i=1}^{n}{ a_i F_i}$, we wrote $\val(x)$ as shorthand for $\sum_{i=1}^{n}{ a_i \val(F_i)}$.

All quantum graphs in the run are linear combinations of at most $n$ graphs, thus any linear combination
requires $O(n)$ arithmetic operations.

For the extraction of $\alpha^{(h)}$, $\hyp$ computes linear combinations of the results of $\val$ queries, $n$ times.

For the extraction of $\beta^{(h)}$, $\hyp$: 
\begin{enumerate}
	\item 
	Computes $p_{ij} = p_i \otimes p_j$ for $i,j \in[n]$. Each of these requires performing $2$-connections between
	$O(n^2)$ pairs of graphs of size $O(|x|)$, and the computation is performed for $O(n^2)$ indices $i,j$.
	\item
	Computes $p_{ij}p_{ij}$ for $i,j \in[n]$ and $\val(p_{ij}p_{ij})$, and computes
	$p_{ij}K_2$ and $\val(p_{ij}K_2)$. Each of these requires $O(n^4)$ operations on graphs of size $O(\poly(|x|))$. 
	The computation is performed for $O(n^2)$ indices $i,j$.
\end{enumerate}
In total, in the $\nth$ iteration of Algorithm \ref{alg:full}, $\hyp$ runs in time
$O(\poly(n,|x|))$.
\end{proof}

\end{document}